\documentclass{article}

\usepackage{microtype}
\usepackage{graphicx}
\usepackage{subcaption}
\usepackage{booktabs} %

\usepackage{hyperref}

\usepackage[preprint]{icml2026}

\usepackage{amsmath}
\usepackage{amssymb}
\usepackage{mathtools}
\usepackage{amsthm}

\usepackage{fancyvrb}

\usepackage[capitalize,noabbrev]{cleveref}

\usepackage{comment}
\usepackage{graphicx}                %
\usepackage{booktabs}                %
\usepackage{microtype}  
\usepackage{hyperref}
\usepackage{enumitem}
\usepackage{multirow}

\theoremstyle{plain}
\newtheorem{theorem}{Theorem}[section]
\newtheorem{proposition}[theorem]{Proposition}

\theoremstyle{definition}
\newtheorem{definition}[theorem]{Definition}

\theoremstyle{remark}

\usepackage[table]{xcolor}
\definecolor{academicBlue}{rgb}{0.28,0.39,0.55}

\usepackage[textsize=tiny]{todonotes}

\icmltitlerunning{VSB}

\begin{document}

\twocolumn[
  \icmltitle{
When to Commit? Towards Variable-Size Self-Contained Blocks for Discrete Diffusion Language Models
}

  \begin{icmlauthorlist}
     \icmlauthor{Danny Wang}{sch}
    \icmlauthor{Ruihong Qiu}{sch}
     \icmlauthor{Zi Huang}{sch}
    \\
    \vspace{0.4em}
    \icmlauthor{\mdseries $^\text{1}$The University of Queensland}{}
  \end{icmlauthorlist}

  \icmlkeywords{Machine Learning, ICML}

  \vskip 0.3in
]

\printAffiliationsAndNotice{}  %

\begin{abstract}
Discrete diffusion language models (dLLMs) enable parallel token updates with bidirectional attention, yet practical generation typically adopts blockwise semi-autoregressive decoding. This switch creates a training-inference mismatch: \textbf{training} denoises with \textbf{full-sequence} context, while \textbf{inference} commits tokens within a \textbf{bounded block without future context}. Therefore, decoding with fixed-size or heuristic-based blocks can lead to premature token commitments, as decisions are made without full access to future context that could alter those choices. Motivated by this, we propose \textbf{self-containedness} as a principled criterion for block commitment. A block is self-contained if its predictions remain consistent with Future-Aware (FA) or without No-Future (NF) access to future context, reframing block boundary selection as a test of self-containedness rather than a heuristic choice.
Based on this principle, we introduce Variable-size Self-contained Blocks (VSB) for dLLMs. VSB scores and selects block boundaries using the \textbf{divergence between token-level predictive distributions under NF and FA} conditioning, which quantifies how predictions would change if future context were revealed. 
We provide theoretical justification linking self-containedness to predictive consistency, and extensive experiments validate VSB's efficacy over fixed-size and heuristic blockwise decoding. 
\end{abstract}

\section{Introduction}
\label{sec:intro}

Recent progress in discrete diffusion language models (dLLM) enable parallel token generation with global bidirectional attention, breaking the serial dependency of autoregression~\cite{nie2025llada,dream,diffusion_survey,liu2025wedlm,ni2025dllm_vs_ar}. 
In practice, many dLLMs are decoded \textbf{blockwise} in a semi-autoregressive manner: the model denoises a block of $B$ tokens in parallel and then often irreversibly commits the block to the prefix, meaning the committed tokens cannot be changed later when future context becomes available, before moving on to generate the next block.~\cite{Arriola2025BlockDiff,Chen2025dParallel,fastdllm}. This strategy balances parallel efficiency with autoregressive style coherence, however, it raises a \textbf{central but under-explored question:}

\textit{When can a block be safely committed such that its tokens remain consistent when future context is revealed?}

Existing blockwise decoding typically decides when to commit simply by choosing block boundary with either fixed block size or surface heuristics (e.g., punctuation). These rules can commit tokens whose predictions would change once later tokens are generated, yielding \textit{future-dependent} commitments and unstable conditioning for subsequent blocks.
~\cite{lu2025adablock,Li2025DAEDAL,dllmvar}.

\begin{figure}[t!]
    \centering
    \includegraphics[width=0.8\linewidth]{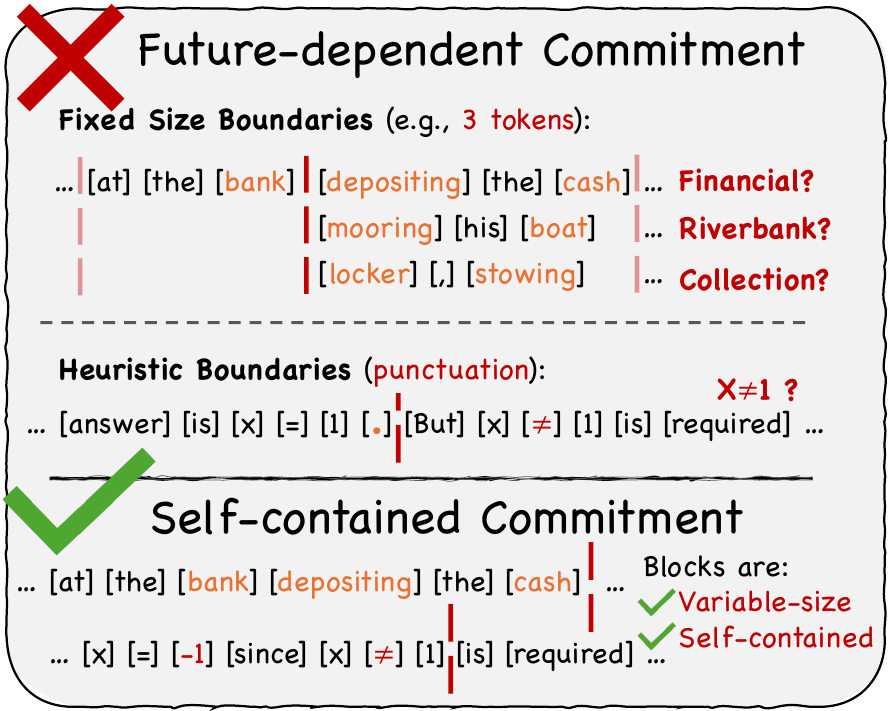}
    \caption{\textbf{Future-dependent vs. self-contained commitments in dLLMs.} Fixed or heuristic boundaries can commit tokens whose meaning rely on future context, while self-contained commitment preserves semantic coherence and reduces future dependence.}
    \label{fig:committing_comparison}
    \vspace{-0.6cm}
\end{figure}

This typical blockwise schema exposes two coupled issues:
\textbf{(1) Fundamental training-inference mismatch.} Standard dLLMs are typically trained to denoise under \textit{full-sequence} ($x_{1:\text{full}}$) bidirectional context: when predicting a token, the model can condition on both its left prefix and right suffix~\cite{soft_masked_dllm,nie2025llada,llada1_5,MDLM}. This training setup is natural for full-length bidirectional decoding, as it encourages the model to use information from the entire sequence for making predictions. During typical blockwise semi-AR decoding, by contrast, committing a block is often \textit{irreversible}: once tokens are committed, they cannot be revised using later context~\cite{fastdllm,Arriola2025BlockDiff,lu2025adablock}. If the model would substantially change its predictions for tokens in the current block after seeing future tokens, the block is not self-contained and should not be committed.
\textbf{(2) Heuristic block boundaries.} Most methods either fix the block size $B$ (e.g., 32 tokens) or align boundaries with surface cues such as punctuation~\cite{lu2025adablock,Xiong2025Stepwiser,Arriola2025BlockDiff}. Neither choice ensures that the committed block is semantically complete or independent of future context. As illustrated in Figure~\ref{fig:committing_comparison}, a fixed-size boundary can cut through an important semantic unit, and  punctuation cue (e.g., a full stop) does not imply that the preceding phrase is resolved without the following clause. Both cases lead to \textit{future-dependent} commitments that can propagate errors to subsequent blocks.
Recent work has started to reduce this mismatch with blockwise-aligned training objectives, but boundary selection typically remains heuristic, without an explicit principled notion of when a block is ready to be committed~\citep{sun2025blockwisesft,Chen2025dParallel,fastdllm_v2}.

In light of these challenges, we propose \textbf{self-containedness} as the criterion for placing block boundaries in semi-AR decoding. 
A candidate block is self-contained if the model's predictive distributions for tokens inside the block remain consistent when future context is revealed. This directly matches the irreversibility of commitment: we should commit tokens only when their predictions \textbf{no longer rely on future information beyond the boundary.}

However, self-containedness is inherently counterfactual because future tokens are unavailable at inference time. We resolve this by comparing two predictive distributions under the \textit{same prefix} within a fixed \textit{ block budget} of length $W$: (i) \textbf{Future-Aware (FA)} predictions, where the model conditions on the entire budgeted block, and (ii) \textbf{No-Future (NF)} predictions, where the model conditions only up to  candidate boundary $b$.
\textbf{If the predictive distributions for the tokens to be committed are consistent between FA and NF, then committing up to $b$ is self-contained 
}; if the distributions diverge, the boundary is premature. 
This motivates quantifying self-containedness by measuring the divergence between conditioning with and without future information (NF-FA), enabling \textbf{self-contained variable-size blocks} without fixed lengths or punctuation heuristics.

We introduce \textbf{Variable-Size Self-Contained Blocks (VSB)} to realise this principle. VSB (i) aligns training with blockwise inference by training under the same \textit{No-Future} truncation used at commitment time, and (ii) dynamically selects self-contained block boundaries by scoring candidate cutoffs via NF-FA divergence. Our contributions are:
\begin{itemize}[leftmargin=*,labelsep=0.0em,itemsep=0pt,parsep=0pt,topsep=0pt]
    \item \textbf{Principle:} We formalise \textbf{\textit{Self-Containedness}} as a criterion for determining block boundaries in blockwise semi-AR decoding. A block is committed only when revealing future tokens does not largely alter its predictive distributions.
    \item \textbf{Method:} We propose \textbf{VSB}, a \textbf{self-containedness-governed} training and decoding framework that aligns training with inference via \textit{No-Future} truncation and selects variable-size blocks using \textit{Future-Aware} \& \textit{No-Future} divergence.
    \item \textbf{Theoretical: }We relate NF-FA divergence to predictive consistency within committed blocks, providing intuition for why enforcing self-containedness mitigates errors from premature commitment.
\end{itemize}

\section{Preliminaries}
\label{sec:prelim}
We first define terminology used throughout the paper. Let $x_{1:L}$ denote a length-$L$ token sequence.
\begin{enumerate}[leftmargin=*,labelsep=0.0em,itemsep=0pt,parsep=0pt,topsep=0pt]
    \item \textbf{Prefix and commitment.} During block decoding, the model maintains a committed prefix $x_{1:p}$, ending at position $p$. \textit{Committing} means appending the newly generated block of tokens to this prefix. Once committed, these tokens are treated as fixed context and non-revisable.
    \item \textbf{Block budget and candidate boundary.} We restrict the size of a block to a fixed \emph{block budget} of length $W$ (e.g., $64$ tokens). We refer to $(p:q]$ as the budgeted block where $q=\min(p+W,L)$. $b \in (p:q]$ is a \textit{candidate block boundary} within the budgeted block.
\end{enumerate}
\subsection{Discrete Masked Diffusion}
\paragraph{Discrete masked diffusion training on the full token sequence for text.}
Let $\mathcal{V}$ be the vocabulary. Given an input token sequence  $x_{1:L}\in\mathcal{V}^L$ of length $L$ and diffusion step $t\in\{1,\dots,T\}$, a masked corruption process produces a noisy sequence as:
$x^{t} \sim q_t(\cdot \mid x)$,
where some positions are replaced by a special \texttt{[MASK]} token $\mathbf{M}$ based on a masking schedule~\cite{nie2025llada,MDLM}. A discrete dLLM parameterised by $\theta$ outputs per-position logits
$\ell_{\theta,i}(x^{t},t)\in\mathbb{R}^{|\mathcal{V}|}$ and categorical distributions:
\begin{equation}
\vspace{-0.3cm}
p_\theta(x_i \mid x^{t},t)=\mathrm{softmax}\big(\ell_{\theta,i}(x^{t},t)\big)[x_i].
\vspace{-0.0cm}
\label{eq:dlm_cond}
\end{equation}
The standard masked diffusion objective trains on full  sequences~\cite{nie2025llada,dream,diffusion_survey}, so the denoiser can attend to all unmasked tokens in $x^t_{1:L}$ when reconstructing masked positions:
\vspace{-0.1cm}
\begin{equation}
\mathcal{L}_\text{D}
=
-\mathbb{E}_{(x,t,x^{t})}
\left[
\frac{1}{t}\sum_{i=1}^{L}
\mathbf{1}[x_{i}^{t}=\text{M}] \log p_\theta(x_i \mid x^{t})
\right].
\label{eq:full_diff_loss}
\end{equation}

\paragraph{Blockwise semi-autoregressive decoding.}
Standard blockwise decoding partitions generation into fixed-size blocks~\cite{Arriola2025BlockDiff,fastdllm_v2,Li2025DAEDAL,nie2025llada,dllmcache,seeddiffusion}. Following the introduced notation, at each step, given a committed prefix ending at position $p$, the model allocates a fixed \textit{block budget} of length $W$ and decodes the entire budgeted block, commits them to the prefix, and then advances.

\subsection{Training-Inference Mismatch} \label{subsec:train_infer_mismatch}
A central difficulty in blockwise decoding for discrete diffusion language models is a mismatch between how the model is trained and how it is used at inference time.

\paragraph{Full-sequence training.}
Under the standard masked diffusion objective in Eq.~\eqref{eq:full_diff_loss}, a dLLM is trained on complete token sequences of length $L$. At each diffusion step $t$, predictions at every position are conditioned bidirectionally on the entire noisy sequence $\tilde{x}^t_{1:L}$, obtained by corrupting the clean input $x_{1:L}$~\cite{nie2025llada,dream}. For analysing blockwise generation, it is helpful to partition the sequence into three contiguous regions: the committed prefix $x_{1:p}$, a candidate next block $x_{p+1:q}$, and the remaining suffix $x_{q+1:L}$ (corresponds to unavailable future tokens for inference time)~\cite{Arriola2025BlockDiff}. During full-sequence training, predictions for tokens in $x_{p+1:q}$ can condition on \textit{both} the prefix and the suffix, whereas during blockwise inference the suffix becomes unavailable.

\paragraph{Blockwise inference.}
In contrast, blockwise decoding commits tokens incrementally: a boundary position $q$ determines the next committed block $(p:q]$. When committing this block, the model conditions only on information up to $q$, since tokens beyond $q$ have not been generated.

\paragraph{Resulting mismatch.}
As a result, the conditional distributions learned during training differ from those used during inference~\cite{sun2025blockwisesft}. Intuitively, a token may appear easy to denoise during training because the model can rely on future context from the suffix, but the same token may become ambiguous or unstable when that future information is removed at inference time (Figure~\ref{fig:committing_comparison}). Formally, for a token position $i \in [p+1:q]$ with corresponding masked sequence $\tilde{x}$, training uses full-sequence conditionals whereas blockwise decoding uses truncated conditionals:
\begin{equation}
\underbrace{p_\theta(x_i \mid \tilde{x}^{t}_{1:L}, t)}_{\text{full-sequence masked training}},\quad
\underbrace{p_\theta(x_i \mid \tilde{x}^{t}_{1:q}, t)}_{\text{blockwise decoding}},
\end{equation}
leading to the mismatch:
\begin{equation}
p_\theta(x_i \mid \tilde{x}^{t}_{1:L}, t)
\;\neq\;
p_\theta(x_i \mid \tilde{x}^{t}_{1:q}, t).
\label{eq:mismatch}
\end{equation}

\paragraph{Relation to prior blockwise training methods.}
Recent fixed-block alignment approaches~\citep{sun2025blockwisesft,Chen2025dParallel,fastdllm_v2,Arriola2025BlockDiff,blcokd2f,zhong2026beyond} also restrict future context during training to better match blockwise inference. However, these methods rely on predefined block boundaries, for example fixed block sizes or punctuation and line breaks. In contrast, our work establishes a self-containedness principle and focuses on decoding a block that is self-contained, enabling variable block sizes determined by the model's sensitivity to future context. A detailed comparison is provided in Section~\ref{sec:related} and Appendix~\ref{appendix:related_work}.

\section{Motivation: From Self-containedness to Dynamic Block Boundary}
\label{sec:objective}
Following the twin issues of training-inference mismatch and heuristic block boundaries, a natural question arises:

\textit{Given a budgeted block, which part of the tokens are ready to be committed without relying on future tokens to resolve?}

We argue that \textbf{self-containedness} should govern block commitment: \textbf{a block is self-contained if committing it does not change the model's predictive distributions when future context is revealed}.

\subsection{Formalising Self-containedness} \label{sec:formalising_self_containedness}
Following the defined notations in Section~\ref{sec:prelim}. Let $p$ denote the end position of the committed prefix.
Fix a block budget of length $W$ and let $q = \min(p+W,L)$ be the end position of the budgeted block $(p:q]$. Within this budget, we consider committing a shorter \textit{candidate block} $(p:b]$ at $b\in(p,q]$, leaving the remaining tokens $(b:q]$ as the \textit{within-block future}:
\begin{equation} 
\mathrm{Block}(p,b) = (p : b], \qquad
\mathrm{Future}(b) = (b : q].
\label{eq:block_construction}
\end{equation}
A candidate block ending at $b$ is self-contained only if predictions for tokens in $\mathrm{Block}(p,b)$ remain consistent when the future $\mathrm{Future}(b)$ is revealed. We formalise this using two conditioning semantics.

\begin{definition}[No-Future and Future-Aware conditionals] \label{def:nf_fa}
Let $p$ be the end of the committed prefix and let $q=\min(p+W,L)$ denote the end of the block budget of length $W$. For any candidate boundary $b$ with $p < b \le q$ and any $i \in (p:b]$, define:
\paragraph{1. No-Future (NF).}
The model predicts $x_i$ using context only up to $b$ (no access to tokens beyond $b$):
\begin{equation}
P^{\mathrm{NF}}_{\theta,i}(p,b)
=
p_\theta(x_i \mid x_{1:b}),
\qquad i \in(p:b].
\label{eq:nf_dist_def}
\end{equation}
\paragraph{2. Future-Aware (FA).}
The model predicts $x_i$ with access to the full budgeted block $x_{1:q}$ (including the future $x_{b+1:q}$):
\begin{equation}
P^{\mathrm{FA}}_{\theta,i}(p,b,q)
=
p_\theta(x_i \mid x_{1:q}),
\qquad i\in(p:b].
\label{eq:FA_dist_def}
\end{equation}
\end{definition}
\begin{figure}[!t]
    \centering
    \includegraphics[width=1\linewidth]{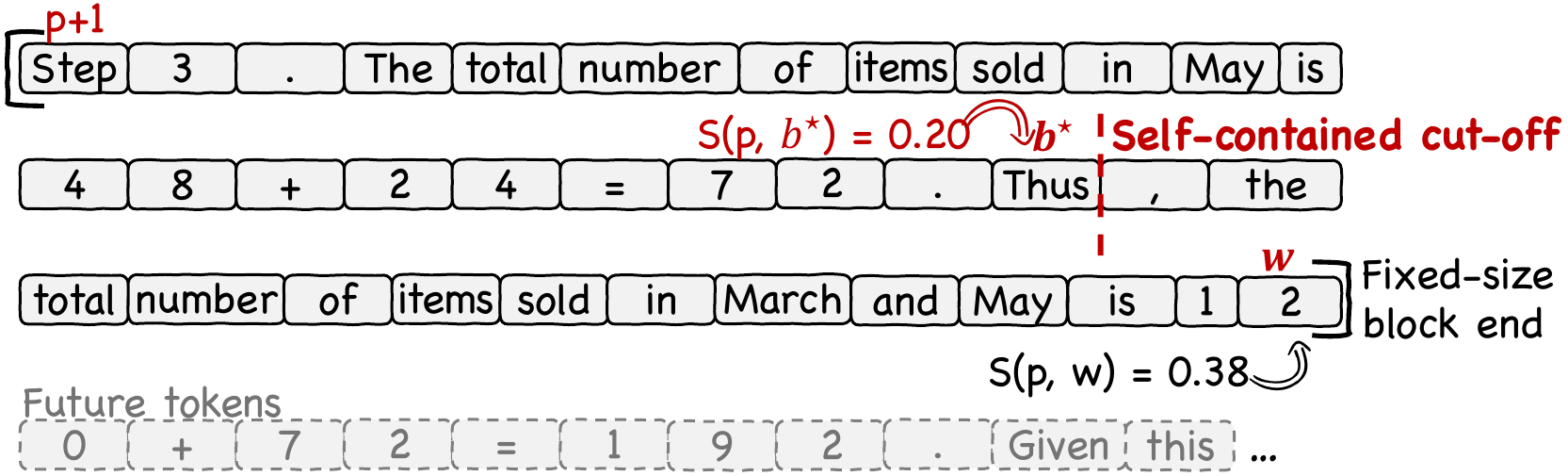}
    \caption{\textbf{Self-containedness guided block selection.} Fixed-size blocks can cut off mid-number with high future dependence, while VSB selects a boundary 
    that is more semantically complete.}
    \label{fig:score_illustration}
    \vspace{-0.3cm}
\end{figure}
\paragraph{Self-containedness Divergence.} Let $\mathcal{D}$ denote a divergence between categorical distributions, such as the KL divergence. We define the \textbf{self-containedness} of a block with a candidate boundary $b$ as the average per-token divergence between these predictive distributions:
\begin{equation}
S(p,b)
=
\frac{1}{b-p}
\sum\nolimits_{i=p+1}^{b}
\mathcal{D}\!\left(
P^{\mathrm{NF}}_{\theta,i}
\,\middle\|\,
P^{\mathrm{FA}}_{\theta,i}
\right),
\label{eq:self_contain_score}
\end{equation}
which tells on average how strongly predictions for tokens $x_{p+1:b}$ depend on future context beyond $b$. Intuitively, the lower the divergence, the more self-contained the block is.
\begin{definition}[$\varepsilon$-self-contained block] \label{def:epsilon_self_contained}
A block $\mathrm{Block}(p,b)$ is \textit{$\varepsilon$-self-contained} if $S(p,b) \le \varepsilon$.
\end{definition}
\noindent\textbf{Implications.}
Since $S(p,b)$ measures the average NF-FA divergence within a block, Definition~\ref{def:epsilon_self_contained} implies that future tokens add little information for predicting tokens in $(p\!:\!b]$. Figure~\ref{fig:score_illustration} shows that self-containedness prefers semantically complete boundaries over fixed-size blocks, while Section~\ref{sec:exp} Figure~\ref{fig:self-containedness_density} compares their distributions.

\subsection{Self-containedness and Predictive Consistency}
Committing a block at boundary $b$ is problematic when predictions under \textit{No-Future} and \textit{Future-Aware} conditioning diverge, as this indicates that the model's decisions for tokens in $(p:b]$ still depend on the future context $(b:q]$. The self-containedness divergence $S(p,b)$ quantifies this consistency by measuring the average discrepancy between NF and FA predictive distributions. When $S(p,b)$ is small, the model's predictions for tokens in $(p:b]$ remain stable even when future tokens are revealed, as formalised below.

\begin{theorem}[Predictive consistency under self-containedness]
\label{thm:consistency}
Let $\mathrm{Block}(p,b^\star)$ be an $\varepsilon$-self-contained block, where $P^{\mathrm{NF}}$ and $P^{\mathrm{FA}}$ denote the predictive distributions under No-Future (NF) and Future-Aware (FA) conditioning, respectively.
Assume that the divergence $\mathcal{D}$ upper-bounds total variation distance $\delta_{\mathrm{TV}}(P,Q)$ up to a constant. Then the average NF-FA predictive discrepancy within the block, measured in total variation is bounded by:
$$
\frac{1}{b^\star-p}\sum\nolimits_{i=p+1}^{b^\star}
\delta_{\mathrm{TV}}\!\left(P_i^{\mathrm{NF}}, P_i^{\mathrm{FA}}\right)
\le\
\mathcal{O}(\sqrt{\varepsilon}).
$$
Equivalently, revealing future context alters token probabilities within the block by at most $\mathcal{O}(\sqrt{\varepsilon})$ on average.
\end{theorem}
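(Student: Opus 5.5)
The plan is to chain three elementary facts. First, a Pinsker-type comparison gives a per-token bound. Next, Jensen's inequality (concavity of the square root) handles the average over the block. Finally, the definition of $\varepsilon$-self-containedness (Definition~\ref{def:epsilon_self_contained}) closes the bound.

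The key point is how to read the assumption that ``$\mathcal{D}$ upper-bounds total variation up to a constant.'' For the $\mathcal{O}(\sqrt{\varepsilon})$ rate to come out, it must be read in the Pinsker form: there is a constant $C>0$ such that for all categorical $P,Q$ on $\mathcal{V}$,
\begin{equation*}
\delta_{\mathrm{TV}}(P,Q)\le C\sqrt{\mathcal{D}(P\,\|\,Q)} .
\end{equation*}
For $\mathcal{D}=\mathrm{KL}$ this is Pinsker's inequality with $C=1/\sqrt{2}$. I would state this reading explicitly at the start, since it is the only nontrivial input.

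The steps are as follows.
\begin{enumerate}
\item Fix $i\in(p:b^\star]$ and apply the Pinsker-type bound to $P_i^{\mathrm{NF}}$ and $P_i^{\mathrm{FA}}$. This gives $\delta_{\mathrm{TV}}(P_i^{\mathrm{NF}},P_i^{\mathrm{FA}})\le C\sqrt{d_i}$, where $d_i:=\mathcal{D}(P_i^{\mathrm{NF}}\|P_i^{\mathrm{FA}})\ge 0$.
\item Average over the $n=b^\star-p$ positions. Jensen's inequality for the concave map $u\mapsto\sqrt{u}$ (equivalently Cauchy--Schwarz) gives
\begin{equation*}
\frac{1}{n}\sum_{i=p+1}^{b^\star}\sqrt{d_i}\le\sqrt{\frac{1}{n}\sum_{i=p+1}^{b^\star}d_i}=\sqrt{S(p,b^\star)} .
\end{equation*}
The right-hand equality is exactly the definition of $S$ in Eq.~\eqref{eq:self_contain_score}.
\item By Definition~\ref{def:epsilon_self_contained}, $S(p,b^\star)\le\varepsilon$. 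Combining the two displays gives the average TV bound $C\sqrt{\varepsilon}=\mathcal{O}(\sqrt{\varepsilon})$.
\item For the ``equivalently'' clause, recall that $\delta_{\mathrm{TV}}(P,Q)=\sup_{A}|P(A)-Q(A)|$. The same bound therefore controls the change in the probability of any event, in particular any token, when the future $(b^\star:q]$ is revealed.
\end{enumerate}

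The main obstacle is step 2 together with the form of the hypothesis, not any computation. If one instead reads the assumption linearly, as $\delta_{\mathrm{TV}}\le C\mathcal{D}$, one gets the stronger $\mathcal{O}(\varepsilon)$, which is also $\mathcal{O}(\sqrt{\varepsilon})$ for $\varepsilon\le1$. Under either reading the claim holds, and I would remark on this. Jensen must be applied to the average of square roots rather than termwise: only the average divergence is controlled, so individual $d_i$ could exceed $\varepsilon$. Concavity is exactly what lets the block-level average bound transfer to the average TV.
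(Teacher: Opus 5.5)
Your proposal is correct and follows essentially the same route as the paper: Pinsker's inequality per token, Jensen's inequality for the concave square root to bring the average inside, and then the definition of $\varepsilon$-self-containedness, giving $\sqrt{\varepsilon/2}$. The only difference is that the paper fixes $\mathcal{D}=D_{\mathrm{KL}}$ (so $C=1/\sqrt{2}$), whereas you state the hypothesis in the slightly more general Pinsker form $\delta_{\mathrm{TV}}\le C\sqrt{\mathcal{D}}$, which is the reading the $\sqrt{\varepsilon}$ rate actually requires.
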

\paragraph{Interpretation.} Theorem~\ref{thm:consistency} implies that when the self-containedness divergence $S(p,b)$ is small, revealing future context induces only a small change in the model's predictive distributions within the block, on average. The proof is provided in Appendix~\ref{proof:self_contained_commit}.

\subsection{Why Fixed Block Size Is Structurally Limiting}
Under the self-containedness principle, whether a block can be committed depends on how strongly its predictions rely on future context. 
As a result, the commit length that satisfies $S(p,b)\le\varepsilon$ can vary substantially across prefixes: some regions admit long self-contained blocks, while others require shorter commits.

A fixed-size decoding strategy, however, commits a constant block size $W$ regardless of content. Whenever this fixed length exceeds what self-containedness permits at a given prefix, the committed block violates $S(p,b)\le\varepsilon$ and becomes future-dependent.
Formally:

\begin{proposition}[Characterisation of fixed-size self-containedness]
\label{prop:fixed_length_characterisation}
Fix a block size $W>0$. A fixed-size commitment strategy with block size $W$
satisfies $S(p,b)\le\varepsilon$ for all committed blocks if and only if:
$
S(p,p+W)\le\varepsilon
$
for every prefix position $p$ at which a block of size $W$ is committed.
\end{proposition}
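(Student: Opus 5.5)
The proposition is essentially definitional, so the plan is to unfold what a fixed-size commitment strategy is and then match the two conditions block by block.

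\textbf{Modelling the strategy.} A fixed-size strategy with block size $W$ starts from some committed prefix end $p_0$. It commits the block $(p_k : p_k+W]$ and sets $p_{k+1}=p_k+W$. At the final step the commit is truncated to $q=\min(p+W,L)$, but I will treat that case below. The set of committed blocks is therefore exactly $\{\mathrm{Block}(p,p+W) : p \in \mathcal{P}\}$, where $\mathcal{P}=\{p_0,p_1,\dots\}$ is the set of prefix positions at which a block of size $W$ is committed. In particular, the candidate boundary chosen at prefix $p$ is always $b=p+W$. This boundary equals the end $q$ of the budgeted block, so it lies in $(p,q]$ and $S(p,b)$ is well defined by Eq.~\eqref{eq:self_contain_score}.

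\textbf{($\Rightarrow$).} Suppose every committed block satisfies $S(p,b)\le\varepsilon$. For each $p\in\mathcal{P}$, the committed block is $\mathrm{Block}(p,p+W)$, so specialising the hypothesis to $b=p+W$ gives $S(p,p+W)\le\varepsilon$.

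\textbf{($\Leftarrow$).} Conversely, suppose $S(p,p+W)\le\varepsilon$ for every $p\in\mathcal{P}$. Any committed block has the form $\mathrm{Block}(p,b)$ with $p\in\mathcal{P}$ and $b=p+W$, so its score is $S(p,p+W)\le\varepsilon$. Hence every committed block is $\varepsilon$-self-contained in the sense of Definition~\ref{def:epsilon_self_contained}.

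\textbf{Boundary case.} The one step needing care is the last block, where $L-p<W$ and the strategy commits $(p:L]$ instead. I would handle this by convention. Interpret $S(p,p+W)$ as $S(p,\min(p+W,L))$, consistent with $q=\min(p+W,L)$ in Section~\ref{sec:prelim}; equivalently, read the proposition as ranging over the prefixes at which a full block of size $W$ is committed. With that reading the equivalence above goes through verbatim.

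\textbf{Remark to note.} Since $b=q$ for these blocks, $\mathrm{Future}(b)$ is empty. NF and FA then condition on the same context $x_{1:q}$ within the budget, so $S(p,q)$ measures dependence on future context only if FA is taken relative to a later budget or the full sequence. I would note this point but not rely on it, because the equivalence is purely set-theoretic and holds for whatever value $S$ takes.
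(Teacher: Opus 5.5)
Your proposal is correct and follows the paper's proof, which likewise unfolds the definition in each direction. Every committed block of a fixed-size strategy is $(p,p+W]$, so the two conditions coincide; your handling of the truncated final block and your remark that $S(p,q)$ degenerates when $b=q$ go beyond what the paper says but do not change the argument.
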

\vspace{-0.4cm}
\paragraph{Interpretation.}
Proposition~\ref{prop:fixed_length_characterisation} shows why fixed-size commitment is restrictive under self-containedness: since the same $W$ is applied everywhere, self-containedness must hold uniformly across all prefixes. When future dependence varies across the sequence, no single fixed block size can satisfy this condition consistently, motivating adaptive boundary selection within a fixed block budget $(p:q]$. The proof is in Appendix~\ref{proof:fixed_length_characterisation}.

\begin{figure*}[t!]
    \centering
    \includegraphics[width=0.8\linewidth]{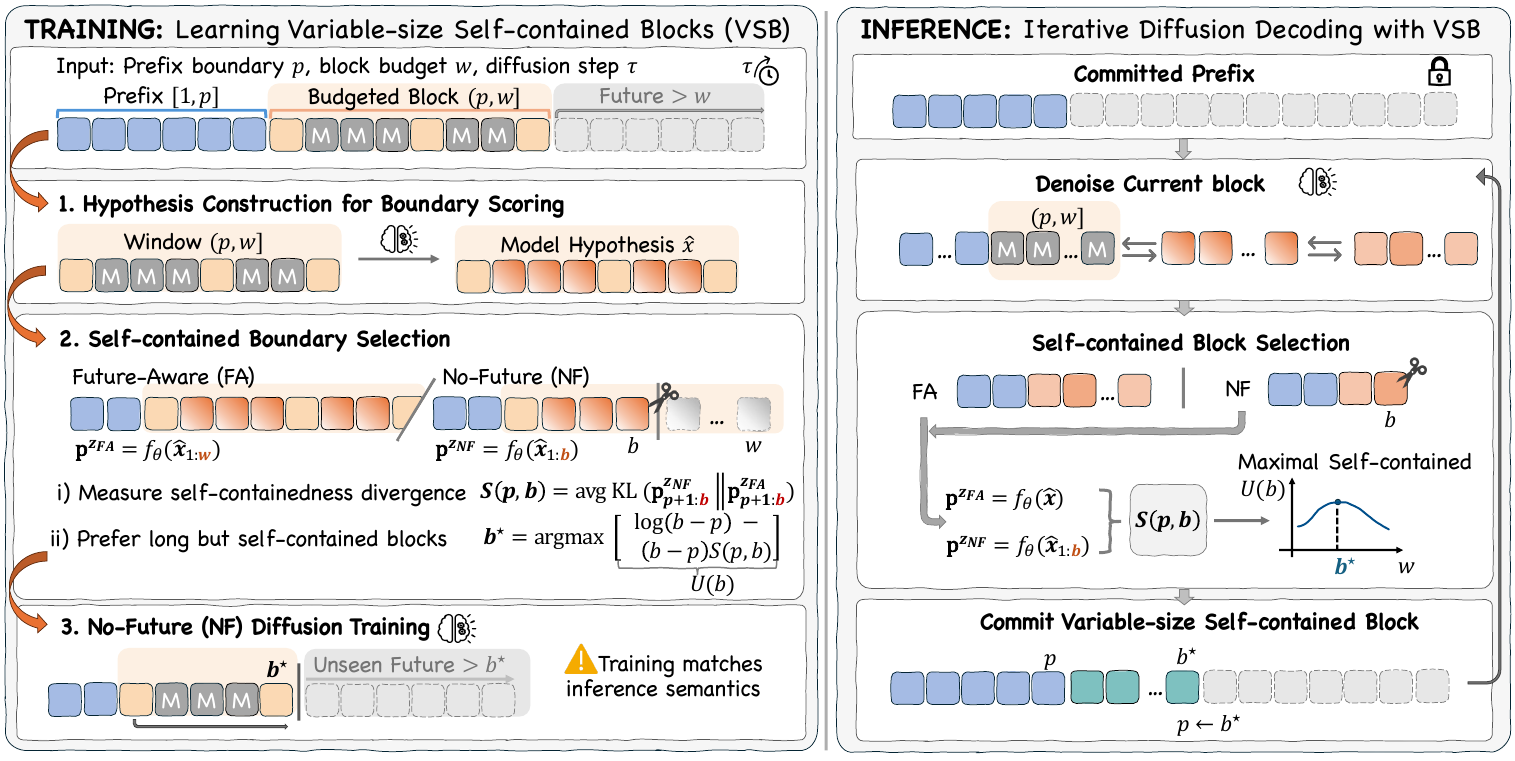}
    \caption{\textbf{Overview of VSB.} During training, candidate boundaries in the budgeted block are scored by self-containedness using \textit{No-Future} and \textit{Future-Aware} conditionals to encourage long, self-contained blocks. At inference, diffusion decoding commits the block that best balances length and self-containedness, yielding adaptive, semantically aligned blocks.}
    \label{fig:vsb}
    \vspace{-0.5cm}
\end{figure*}
\section{Method: Variable-size Self-contained Blocks}
\label{sec:method}
We now present Variable-size Self-contained Blocks (VSB), a training and decoding framework that operationalises the self-containedness principle for blockwise generation in discrete dLLMs. At each step, VSB selects a commitment boundary within a fixed block budget by explicitly measuring how sensitive the block's predictions are to future context. We first describe the core components \textbf{shared by training and inference}, and then give \textbf{phase-specific details}. Figure~\ref{fig:vsb} provides an overview.

\subsection{Primary Mechanisms of VSB}
\paragraph{4.1.1. Block construction.}
We follow the terminology in Section~\ref{sec:prelim}: given a committed prefix ending at $p$, we define a budgeted block $(p,q]$ of length $W$ ending at $q=\min(p+W,L)$. Our goal is to select boundary $b \in (p,q]$.

\paragraph{4.1.2. Hypothesis construction for boundary scoring.} \label{sec:hypothesis}
Self-containedness compares the model's predictions for tokens $(p,b]$ with and without access to future tokens $(b,q]$ (Definition~\ref{def:nf_fa}). Since the true future tokens are unavailable at inference time, VSB performs this comparison on a fixed model-generated \textit{hypothesis} $\hat{x}_{1:q}$, i.e., the model's current token estimate over the budgeted block $(p,q]$. All NF and FA distributions are computed on the same hypothesis.
This hypothesis serves two purposes: it matches the model's decoding time prediction conditioning and ensures that differences in self-containedness across boundaries arise only from access to future tokens, not changes in decoding state. Formally, given a committed prefix $x_{1:p}$ and window end $q$, the hypothesis $\hat{x}_{1:q}$ is obtained by (partially) resolving the masked window $(p,q]$ conditioned on $x_{1:p}$:

\textbf{Training.} The prefix $x_{1:p}$ is taken from the ground-truth sequence, and the hypothesis is constructed by refining the masked positions in the active budgeted block using the model's predictions:
\begin{equation}
\hat{x}_{p+1:q}
=
\arg\max p_\theta(\cdot \mid x_{1:p}, \text{M}_{p+1:q}).
\label{eq:tr_hypothesis}
\end{equation}

\textbf{Inference.} The prefix consists of previously committed tokens $x_{1:p}$, and the hypothesis is taken from an intermediate snapshot of the diffusion process:
\begin{equation}
\hat{x}_{p+1:q}
=
\tilde{x}^{\,t_\text{hyp}}_{p+1:q},
\end{equation}
where $\tilde{x}^{\,t_\text{hyp}}$ is the partially decoded sequence at the hypothesis step $t_\text{hyp}$ used for boundary scoring (e.g., final step). Figure~\ref{fig:vsb} illustrates this construction.

\paragraph{4.1.3. Length-aware self-contained boundary selection.} \label{sec:self-contained_boundary_selection_criteria}
Given a hypothesis $\hat{x}_{1:q}$, VSB measures how strongly predictions in a candidate block depend on future context. For a candidate boundary $b \in \mathcal{B}(p,q)$, let $\text{p}^{\mathrm{FA}}$ and $\text{p}^{\mathrm{NF}}$ denote the FA and NF predictive distributions from Definition~\ref{def:nf_fa}. The self-containedness divergence of $(p,b]$ can be obtained via Eq.~\eqref{eq:self_contain_score}. A natural idea is to commit the \textbf{longest block} that satisfies an explicit self-containedness threshold:
\begin{equation}
\max_{b}
\;\; b-p
\quad \text{s.t.} \quad
p < b \le q,
\;\;
S(p,b) \le \varepsilon .
\label{eq:hard_decision_rule}
\end{equation}
In practice, this hard constraint in Eq.~\eqref{eq:hard_decision_rule} can be brittle: small fluctuations in $S(p,b)$ may cause abrupt changes in the selected boundary, or yield no valid $b$ given the choice of $\varepsilon$. Hence, to obtain a \textbf{smoother decision rule without thresholds}, we relax Eq.~\eqref{eq:hard_decision_rule} by trading-off block length against a length-aware self-containedness divergence $S_\text{L-A}$, and select the boundary that maximises this trade-off:
\begin{equation}
\begin{split}
S_{\text{L-A}} &= 
\log(b-p)
-
(b-p)\, S(p,b)
, \\
b^\star
&=
\arg\max_{b \in \mathcal{B}(p,q)}
\Big[
S_{\text{L-A}}
\Big],
\end{split}
\label{eq:boundary_selection_rule}
\end{equation}
where $\mathcal{B}(p,q)=\{b \mid p < b \le q\}$ denotes the candidate boundaries within the block budget.
\paragraph{Intuition.}
Since $S(p,b)$ is an \textit{average} per-token future sensitivity inside $(p,b]$, multiplying by $(b-p)$ yields the \textit{total future-dependence mass} of the block. This matches the commitment risk: once committed, all tokens in the block are frozen, so we penalise how much future-dependent content we are locking in, not merely whether the average token is stable. \textbf{Intuitively}, the first term favors longer commits (diminishing returns via $\log$), while the second discourages committing blocks whose total future dependence is large. A detailed illustration of the relationship between $b,S$ and $S_\text{L-A}$ is provided in Figure~\ref{fig:length_aware_vs_avg}.

\subsection{VSB Training with \textit{No-Future }Alignment}

\textbf{Training mirrors inference:} for each example, we sample a prefix end position $p$ uniformly, define the block budget end $q = \min(p+W,L)$ with length $W$, construct the hypothesis $\hat{x}_{1:q}$ (Eq.~\eqref{eq:tr_hypothesis}), and select the optimal boundary $b^\star$ via Eq.~\eqref{eq:boundary_selection_rule}. We then apply \textit{No-Future} diffusion training as:
\begin{itemize}[noitemsep, topsep=0pt]
    \item Tokens $(p,b^\star]$ are masked according to the standard diffusion schedule.
    \item Truncate the model input to length $b^\star$ to match the inference-time absence of tokens beyond $b^\star$.
\end{itemize}

Let $\mathcal{M}\subseteq[p+1:b^\star]$ denote the masked positions inside the committed block at step $t$ of the noisy sequence $\tilde{x}$, and $\alpha(p,b^\star) = \frac{b^\star-p}{w}$ be a length-dependent weighting factor that discourages degenerate short blocks (e.g., 1 token). The \textit{No-Future} diffusion loss is defined as:
\begin{equation}
\mathcal{L}_{\mathrm{NF}}
=
\alpha(p,b^\star)
\cdot
\frac{1}{|\mathcal{M}|}
\sum_{i \in \mathcal{M}}
-\log p_\theta(x_i \mid \tilde{x}_{1:b^\star}^t).
\label{eq:final_nf_diff_loss}
\end{equation}

The training process is shown in Algorithm~\ref{alg:training} in Appendix~\ref{appendix:algorithms}.

\subsection{VSB Inference Procedure}
At inference time, VSB alternates between diffusion denoising within a budgeted block and selecting a self-contained commitment boundary, as shown in Algorithm~\ref{alg:vsb_inference}. Given a committed prefix $x_{1:p}$, after denoising the budgeted block $(p,q]$, a hypothesis $\hat{x}_{1:q}$ is constructed following Section~\ref{sec:hypothesis}. The optimal block $\mathrm{Block}(p,b^\star)$ is then selected via Eq.~\eqref{eq:boundary_selection_rule} and committed to the prefix:
$$
x_{1:p} \leftarrow \hat{x}_{1:b^\star},
\qquad
p \leftarrow b^\star.
$$
The budgeted block is then advanced and the procedure repeats until generation completes. Since the same training and inference conditioning and hypothesis construction are used, \textbf{VSB introduces no training-inference mismatch}.

\subsection{Boundary Search Acceleration} \label{sec:coarse-fine}
VSB's boundary selection (Algorithm~\ref{alg:vsb_inference}) can be implemented as a full scan over the $W$ candidate boundaries. While this is conceptually simple, it is inefficient in practice. We \textbf{accelerate boundary selection} using a \textbf{coarse-to-fine search} (Algorithm~\ref{alg:vsb_coarse_to_fine}): a coarse pass evaluates boundaries at a stride $s_c$ to locate a high-scoring region under Eq.~\eqref{eq:boundary_selection_rule}, and a fine pass then refines the choice within a local neighborhood at stride $s_f$. This \textbf{preserves self-containedness} while reducing boundary evaluation from $O(W)$ to:
\begin{equation}
O\!\left(\frac{W}{s_c} + \frac{s_c}{s_f}\right).
\end{equation}
\textbf{Practical impact.} Table~\ref{tab:vsb_c2f_ablation} shows this acceleration preserves accuracy while providing consistent speedups, making self-contained boundary selection practical at inference.

\begin{table*}[t]
\centering
\small
\setlength{\tabcolsep}{6pt}
\resizebox{\linewidth}{!}{
\begin{tabular}{r|lll|ll|ll}
\toprule
\multirow{3}{*}{\textbf{Method}}
& \multicolumn{3}{c|}{\textit{Math \& Science}}
& \multicolumn{2}{c|}{\textit{Code}}
& \multicolumn{2}{c}{\textit{General Knowledge}} \\
& \textbf{GSM8K}
& \textbf{MATH500}
& \textbf{GPQA-Diamond}
& \textbf{HumanEval}
& \textbf{MBPP}
& \textbf{MMLU\_Gen}
& \textbf{HellaSwag} \\
& \small(512, 0)
& \small(512, 4)
& \small(64, 0)
& \small(512, 0)
& \small(256, 3)
& \small(3, 0)
& \small(10, 0) \\
\midrule
LLaDA-8B & 76.64  & 35.80 & 26.52 & 37.20 & 35.20 & 62.49 & 72.74 \\
dLLM-Var         & 73.92 & 34.00 & 23.73 & 34.76 & \textcolor{purple}{\textbf{40.20}} & 61.71 & 71.48 \\
Blockwise-SFT    & 75.06 & 37.80 & 26.26 & 37.20 & 35.60 & 62.81 & 73.10 \\
D2F              & 74.90 & 36.00 & 20.20 & 40.24 & 34.60  & 62.71 & 73.33\\
AdaBlock         & 75.66 & 38.20 & \textcolor{purple}{\textbf{28.79}} &  \textcolor{purple}{\textbf{45.12}}  & 37.60 & \textcolor{teal}{\underline{62.98}} & 73.94 \\
DAEDAL           & \textcolor{teal}{\underline{80.74}} & \textcolor{teal}{\underline{38.60}} & \textcolor{teal}{\underline{28.28}} & 39.02 & 36.00 & 62.58 & \textcolor{teal}{\underline{75.60}} \\
\midrule
\rowcolor{academicBlue!10}
\textbf{VSB} 
 w/ LLaDA-8B
& \textcolor{purple}{\textbf{81.12}}$_{+4.48\%}$ & \textcolor{purple}{\textbf{39.00}}$_{+3.20\%}$ & \textcolor{purple}{\textbf{28.79}}$_{+2.27\%}$ & \textcolor{teal}{\underline{43.29}}$_{+6.09\%}$ & \textcolor{teal}{\underline{38.60}}$_{+3.40\%}$ & \textcolor{purple}{\textbf{63.11}}$_{+0.62\%}$ & \textcolor{purple}{\textbf{76.68}}$_{+3.94\%}$ \\
\midrule\midrule
LLaDA-1.5        & 79.38 & 37.00 & 27.78 & 36.59 & 35.60 & 62.52 & 72.34 \\
\rowcolor{academicBlue!10}
\textbf{VSB} w/ LLaDA-1.5
& 83.40$_{+4.02\%}$ & 39.60$_{+2.60\%}$ & 28.28$_{+0.5\%}$ & 46.95$_{+10.36\%}$ & 39.80$_{+4.20\%}$ & 62.94$_{+0.42\%}$ & 76.93$_{+4.59\%}$ \\
\bottomrule
\end{tabular}
}
\caption{
Overall performance of VSB against baselines across diverse benchmarks. Scores are reported as \textbf{Accuracy/Pass@1 (\%)}. Numbers in parentheses denote \textbf{token budget} and \textbf{number of shots}. For fair comparison, best among the \textbf{LLaDA-8B-Instruct based methods} are  \textcolor{purple}{\textbf{bolded}} and Runnerups are \textcolor{teal}{\underline{underlined}}, respectively. The \textbf{relative improvements} of \textbf{VSB against the backbone} are labelled.}
\label{tab:main_results_transposed}
\vspace{-0.6cm}
\end{table*}
\section{Experiments} \label{sec:exp}
\paragraph{Datasets and Evaluation.}
We evaluate on standard LLM benchmarks spanning mathematical \& scientific reasoning (GSM8K~\cite{gsm8k}, MATH500~\cite{math500}, GPQA-Diamond~\cite{GPQA-Diamond}), code generation (HumanEval~\cite{humaneval}, MBPP~\cite{mbpp}), and general knowledge (MMLU\_Generative~\cite{mmlu}, HellaSwag~\cite{hellaswag}). We report accuracy/pass@1 (\%), with more details provided in Appendix~\ref{appendix:Extended_dataset_description}.

\paragraph{Baselines.}
We compare VSB against blockwise-training and semi-AR decoding methods: \textbf{(1)} fixed-size block decoding of LLaDA backbones ({\fontfamily{qcr}\selectfont LLaDA-8B-Instruct} (short for LLaDA-8B)~\cite{nie2025llada} and {\fontfamily{qcr}\selectfont LLaDA-1.5}~\cite{llada1_5}); \textbf{(2)} {\fontfamily{qcr}\selectfont dLLM-Var}~\cite{dllmvar} with native variable-length generation via \texttt{[EOS]} prediction; \textbf{(3)} {\fontfamily{qcr}\selectfont Blockwise-SFT}~\cite{sun2025blockwisesft} aligns supervised fine-tuning with fixed blockwise-decoding; \textbf{(4)} {\fontfamily{qcr}\selectfont D2F}~\cite{blcokd2f} distills a student dLLM with blockwise causal attention; and adaptive block methods \textbf{(5)} {\fontfamily{qcr}\selectfont AdaBlock}~\cite{lu2025adablock} and \textbf{(6)} {\fontfamily{qcr}\selectfont DAEDAL}~\cite{Li2025DAEDAL}.

\paragraph{Implementations.} We use LLaDA-8B-Instruct as the backbone for all training-based and post-hoc baselines, and report VSB results with both LLaDA-1.5 and LLaDA-8B-Instruct. Dataset-specific token budgets, block lengths, and few-shot counts follow standard settings from prior work~\cite{dllm_repo,nie2025llada,lu2025adablock} and are listed in Table~\ref{tab:main_results_transposed}. We sweep the block budget length $W \in \{32,64,96\}$ (default $W=96$) during training. For fair comparison, we match the inference-time budget $W$ to the fixed block length used by each baseline. We use the training split of {\fontfamily{qcr}\selectfont openai/gsm8k}~\cite{gsm8k} for supervised fine-tuning. Full details are in Appendix~\ref{appendix:implementation_details}.

\subsection{Overall Performance}
\textbf{VSB consistently improves over its backbones and is competitive with prior blockwise decoding and training approaches.} Table~\ref{tab:main_results_transposed} shows that on mathematical and scientific reasoning tasks, VSB achieves the strongest or tied-best results among the base methods, suggesting that self-contained boundary selection reduces premature decisions in multi-step reasoning. On code generation, VSB yields larger gains (6.0\% to 10.0\% over corresponding backbones on HumanEval) -- \textbf{despite not involving code-specific training} -- consistent with the need for variable-length semantic units in code as shown in other variable block-length methods like AdaBlock and DAEDAL. And on general knowledge tasks, improvements are smaller but consistent, indicating that VSB does not trade off broad language modelling capability. We provide a qualitative case study comparing VSB to fixed-size decoding in Appendix~\ref{appendix:case_study1}.

\subsection{Effective Boundary Selection via Self-Containedness} \label{sec:length_aware_vs_avg}
\textbf{VSB selects block boundaries where the decoded text is self-contained and semantically complete, rather than at fixed positions or heuristic cutoffs.}
Figure~\ref{fig:length_aware_vs_avg} shows how self-containedness is used to select a block boundary for a fixed prefix ending at position $p$ (here $p{=}994$). We evaluate candidate boundaries $b \in (p, p{+}W]$ within a fixed budget $W{=}64$, where each point in the top panels corresponds to committing the tokens $x_{p+1:b}$. The bottom panel displays the decoded tokens. The selected cutoff $b^*$ aligns with a semantically complete and self-contained text span. This boundary follows directly from the statistics above. In Figure~4b), $b^*$ is the maximum of the length aware trade off, indicating the point where the block is long enough to be meaningful while its predictions are already stable without future context. Tokens beyond $b^*$ show higher divergence, meaning their content still depends on additional continuation from the next block, clearly reflected from the incomplete equation in the decoded tokens. We highlight that using the raw self-containedness divergence (Figure~4c) alone is not sufficient. Simply choosing its minimum may result in very short blocks, often collapsing to a single token. The length aware objective avoids this by balancing divergence and block length, allowing VSB to select coherent boundaries without extra thresholds (Figure~4a). A full decoded token case study is provided in Appendix~\ref{appendix:case_study2}.
\subsection{VSB Decoding vs. Fixed-size Block Decoding} \label{sec:exp_fix_decoding}
Beyond training,\textbf{ VSB can be applied as a post-hoc decoding strategy}, enabling direct comparison with fixed-size block decoding on pretrained backbones.
\textbf{Replacing fixed-size commitment with VSB yields consistent gains.} As shown in Figure~\ref{fig:fixed_vs_vsb_decoding}, on Math500 and MBPP, applying VSB decoding instead of fixed-size decoding (block length 64) consistently improves accuracy, with gains ranging from 1.4\% to 4.4\%. Since each pair of bars differs only in the decoding strategy, these improvements directly indicate the efficacy of our self-contained boundary selection. The results suggest that committing an entire fixed-length block can be suboptimal, while committing the self-contained prefixes yields more reliable conditioning.
\begin{figure}[t!]
    \centering
    \includegraphics[width=1\linewidth]{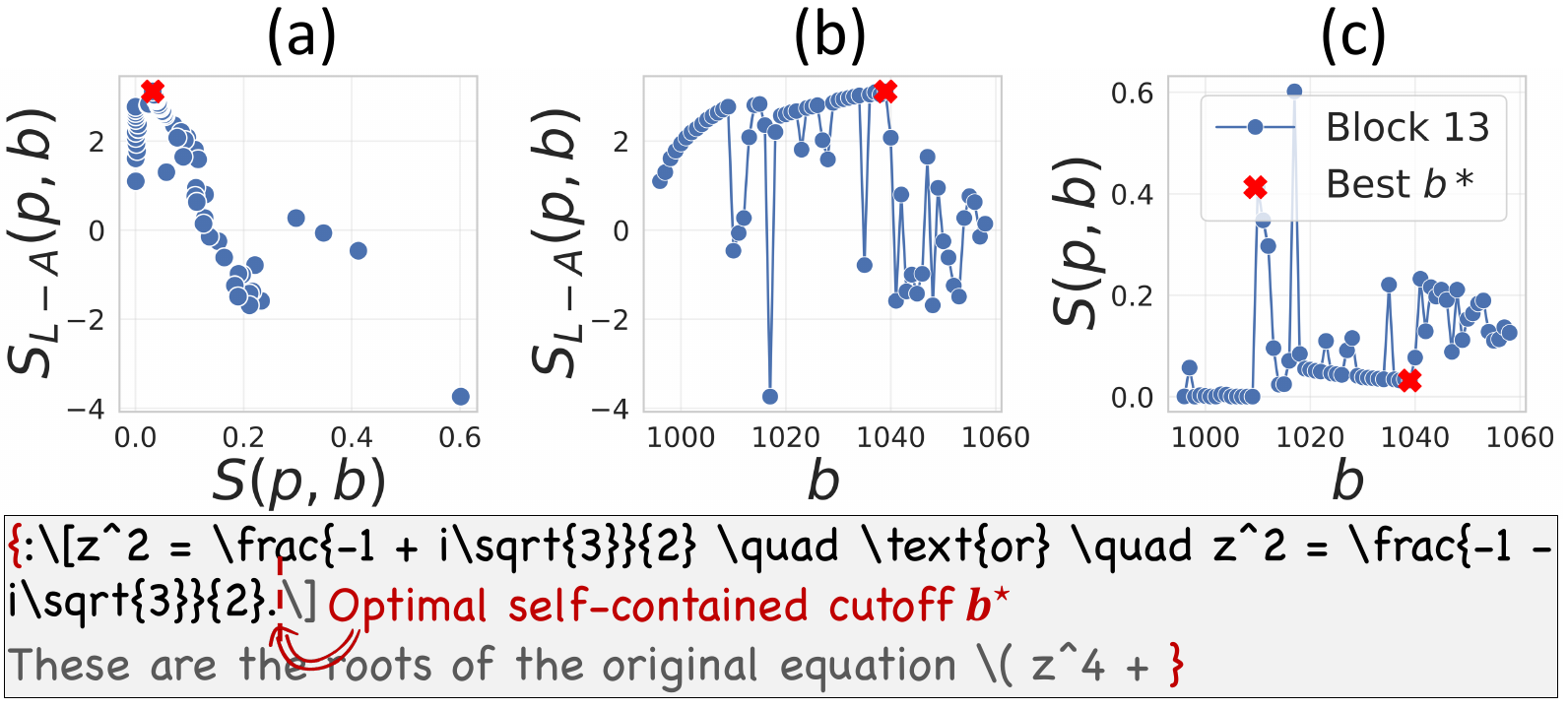}
    \caption{$S(p,b)$ vs. length-aware $S_{\mathrm{L-A}}(p,b)$ across a budgeted block (block $13$ w/ length $W=64$) starting at prefix $p = 994$.}
    \label{fig:length_aware_vs_avg}
    \vspace{-0.4cm}
\end{figure}

\begin{figure}[t!]
    \centering
    \includegraphics[width=1\linewidth]{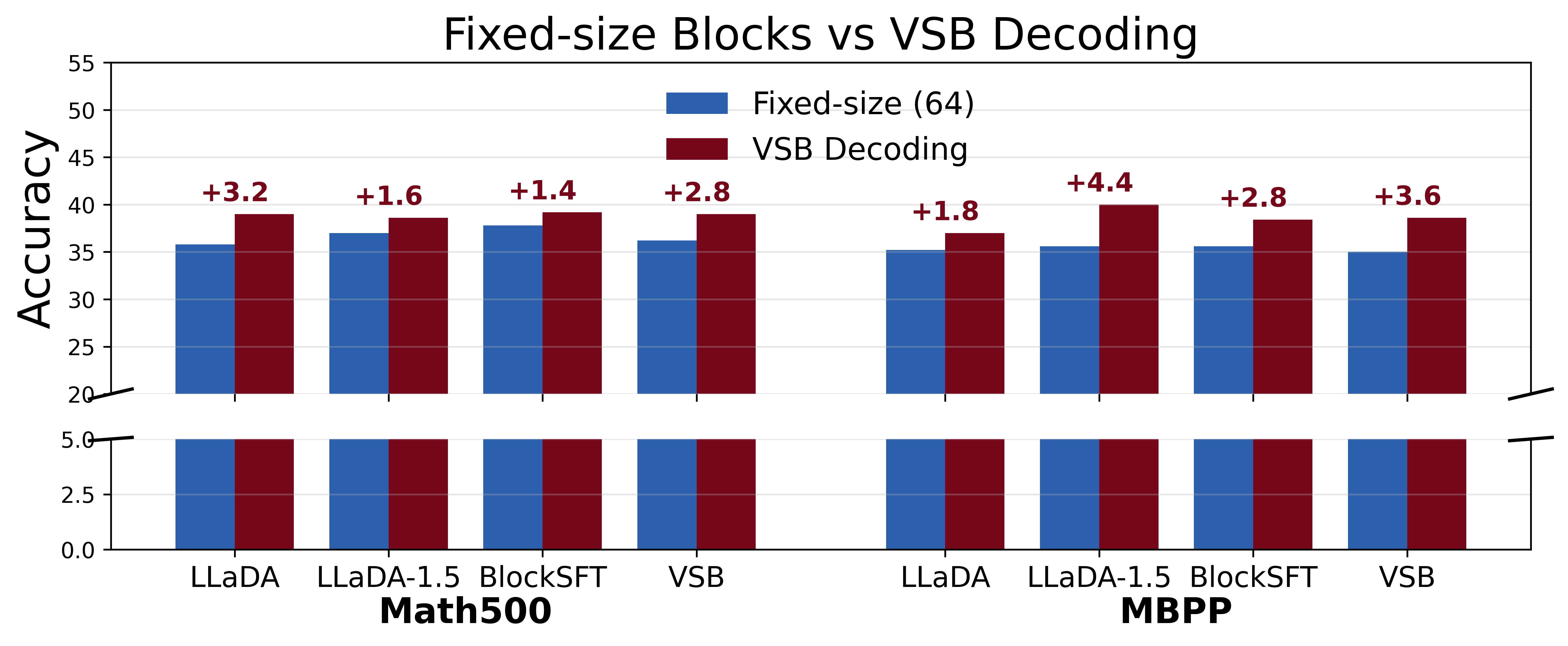}
    \caption{Fixed-size (block length $= 64$) vs. VSB decoding.}
    \label{fig:fixed_vs_vsb_decoding}
    \vspace{-0.6cm}
\end{figure}

\textbf{VSB decoding shifts commitment toward blocks with weaker dependence on future context.}
Figure~\ref{fig:self-containedness_density} shows the distribution of self-containedness divergence $S(p,b)$ for committed blocks under different decoding strategies. On GSM8K, fixed-size decoding yields a broad distribution with higher divergence, while VSB concentrates at lower $S(p,b)$ values. This is consistent with our objective to commit only when predictions are less sensitive to future context. On HumanEval, applying VSB decoding to the LLaDA backbone similarly shifts the distribution toward lower divergence and improves accuracy. This corroborates the performance increase in Figure~\ref{fig:fixed_vs_vsb_decoding}, demonstrating VSB as an effective decoding strategy beyond its training benefits.
\begin{figure}[!t]
    \centering
    \includegraphics[width=0.49\linewidth]{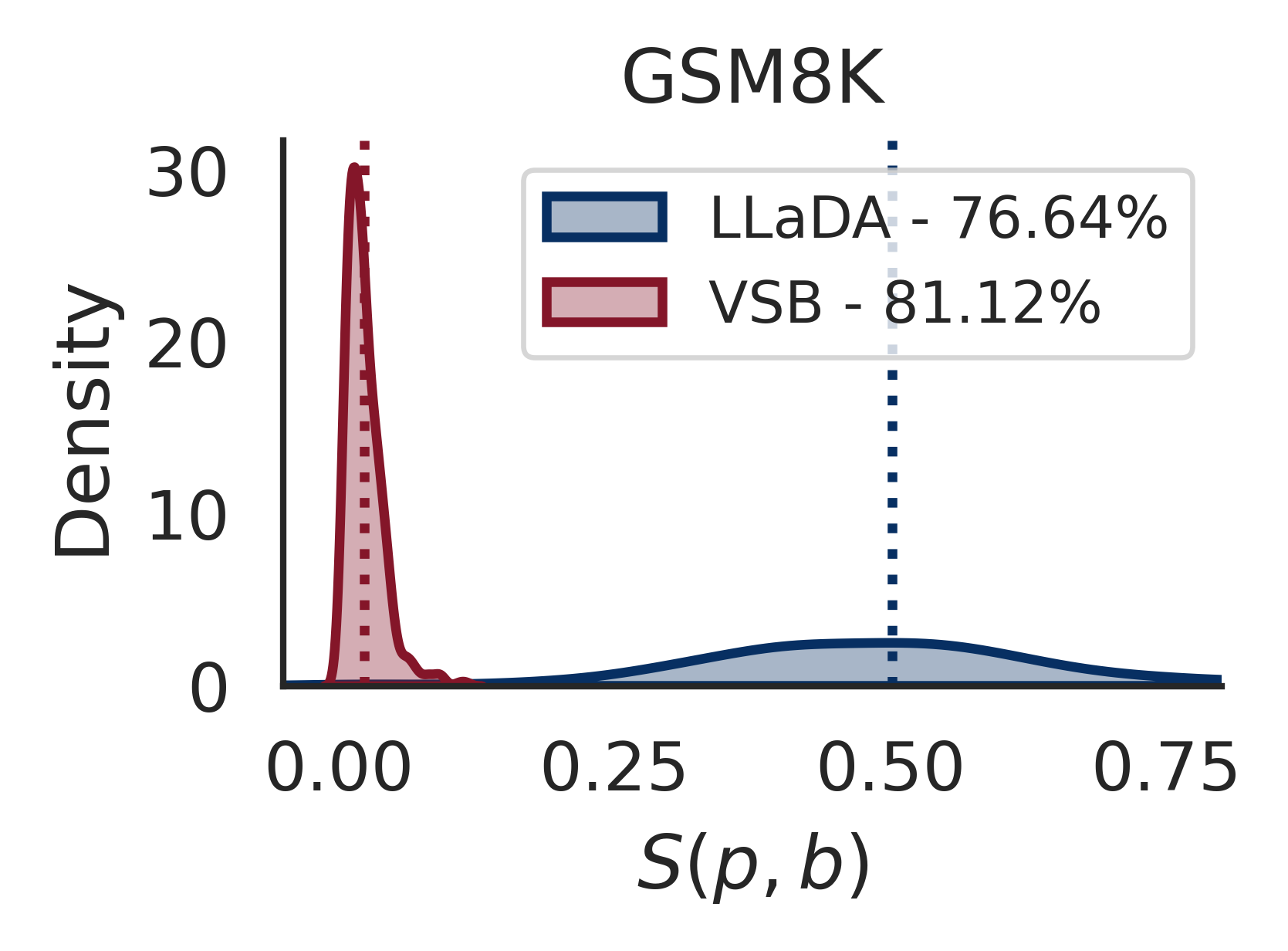}
    \hfill
    \includegraphics[width=0.49\linewidth]{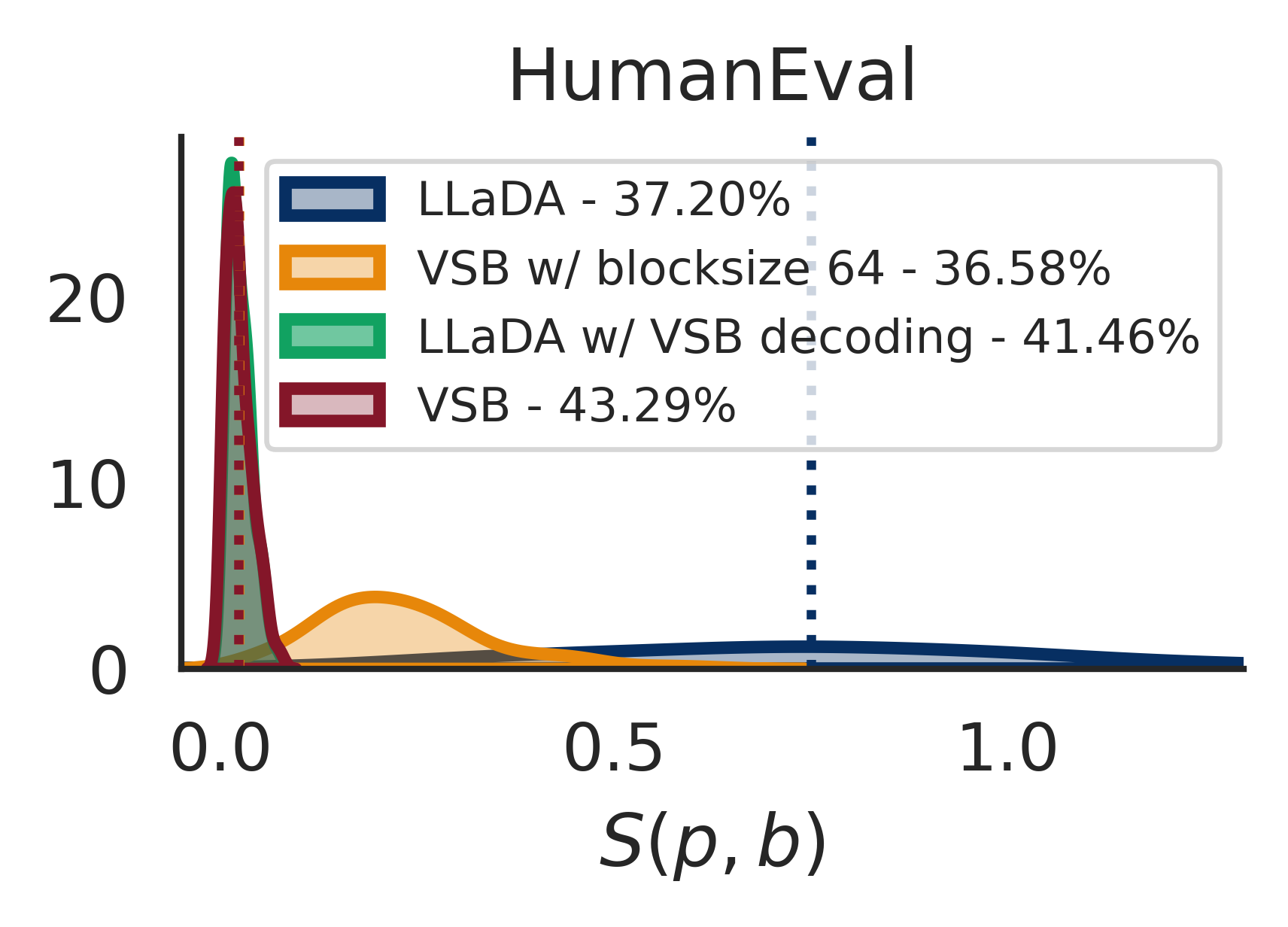}
    \vspace{-0.4cm}
    \caption{Self-containedness divergence distribution comparison.}
    \label{fig:self-containedness_density}
\end{figure}

\subsection{Confidence Threshold Decoding and Cache}
\textbf{VSB is compatible with confidence-threshold decoding and caching.}
As shown in Table~\ref{tab:dynamic_samp_cache}, VSB supports confidence-threshold decoding for speedups~\cite{fastdllm,dimple}, allowing denoising to stop once a confidence threshold (e.g., $\tau_{\text{conf}}=0.9$) is reached instead of using  a fixed schedule. Additionally, VSB can cache generated hidden states to reduce redundant computation. However, in the current LLaDA implementation, KV prefix caching can introduce a causal attention bias that affects NF/FA comparisons~\cite{fastdllm}. To preserve correct semantics of self-containedness scoring, we compute \textit{Future-Aware} and \textit{No-Future} distributions without caching, and instead reuse intermediate hidden states from the \textit{Future-Aware} forward with our coarse-to-fine search to accelerate boundary scoring. See Appendix~\ref{appendix:limitations} for details.
\begin{table}[!t]
\centering
\small
\setlength{\tabcolsep}{6pt}
\renewcommand{\arraystretch}{1.2}
\resizebox{\linewidth}{!}{
\begin{tabular}{l|cc|cc|cc}
\toprule
\multirow{2}{*}{\textbf{Method}}
& \multicolumn{2}{c|}{\textbf{GSM8K}}
& \multicolumn{2}{c|}{\textbf{GPQA-Diamond}}
& \multicolumn{2}{c}{\textbf{HellaSwag}} \\
& Acc. & Speedup
& Acc. & Speedup
& Acc. & Speedup \\
\midrule
LLaDA-8B
& 76.64 & --
& 26.52 & --
& 72.74 & --  \\
VSB
& \textbf{81.12}  & 1.0$\times$
& \textbf{28.79} & 1.0$\times$
& \textbf{76.68} & 1.0$\times$\\
w/ Cache
& \underline{80.36} & 1.2$\times$
& \underline{28.28} & 1.2$\times$
& 75.23 & 1.3$\times$\\
w/ Conf. Thresh.
& 77.48 & 2.7$\times$
& 27.27 & 2.5$\times$
& \underline{75.97} & 2.2$\times$ \\
\bottomrule
\end{tabular}}
\caption{VSB speedups with caching and confidence threshold.}
\label{tab:dynamic_samp_cache}
\vspace{-0.7cm}
\end{table}

\subsection{Hyperparameter Sensitivity}
VSB introduces a small number of hyperparameters, including the block budget $W$ 
. Shown in Figure~\ref{fig:block_size_accuracy}, \textbf{VSB consistently outperforms fixed-size decoding across all $W$} on GSM8K and HumanEval, with the largest gains at small to medium block sizes. Although larger blocks provide more context, committing long blocks can cause unstable predictions. In contrast, VSB commits only self-contained tokens, resulting in shorter and variable average block lengths (as \textcolor{purple}{annotated}). This indicates that the improvements stem from boundary selection rather than increased within-block context, avoiding accumulation of future-dependent tokens.

\begin{figure}[!t]
    \centering
    \includegraphics[width=0.48\linewidth]{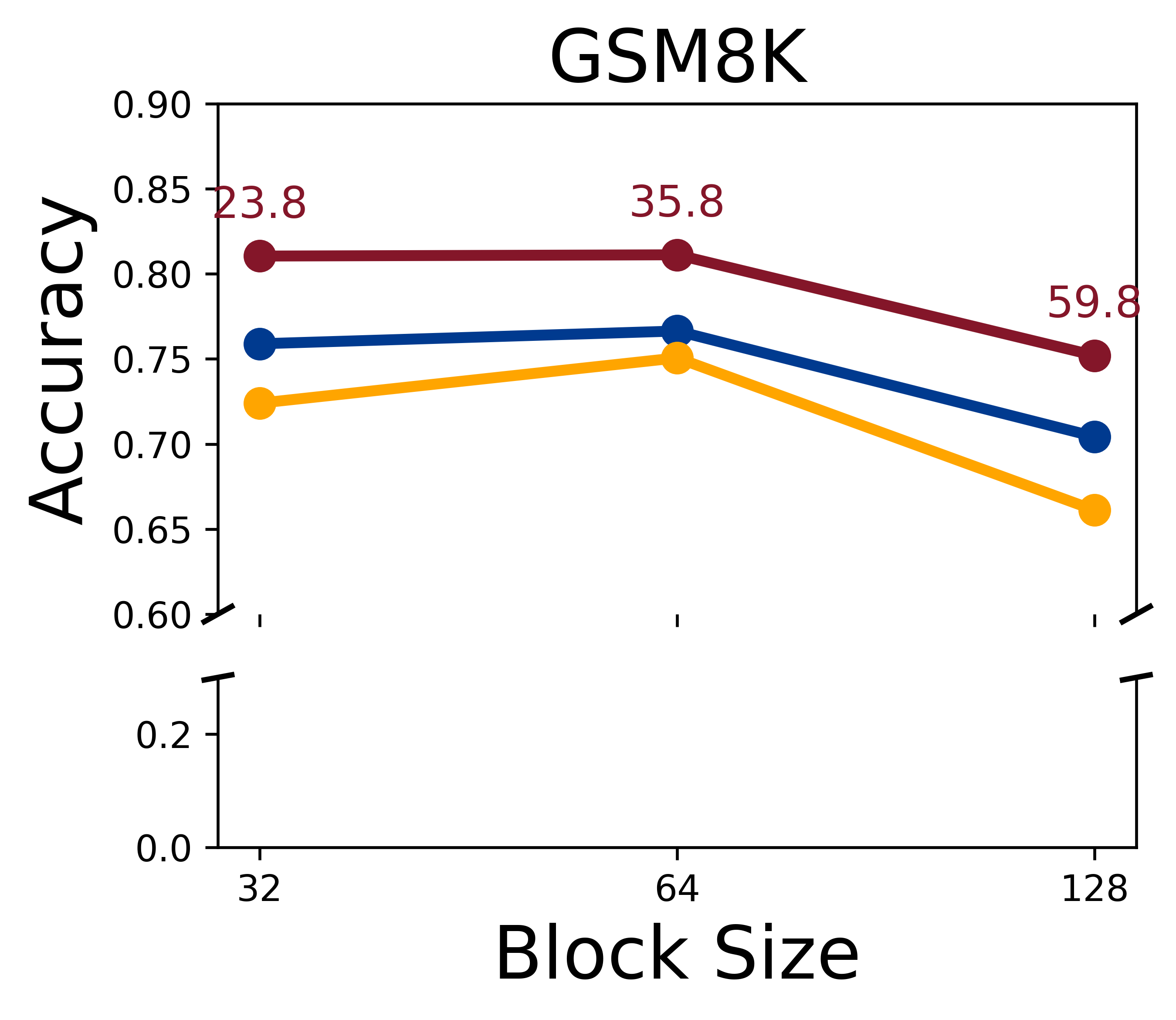}
    \hfill
    \includegraphics[width=0.48\linewidth]{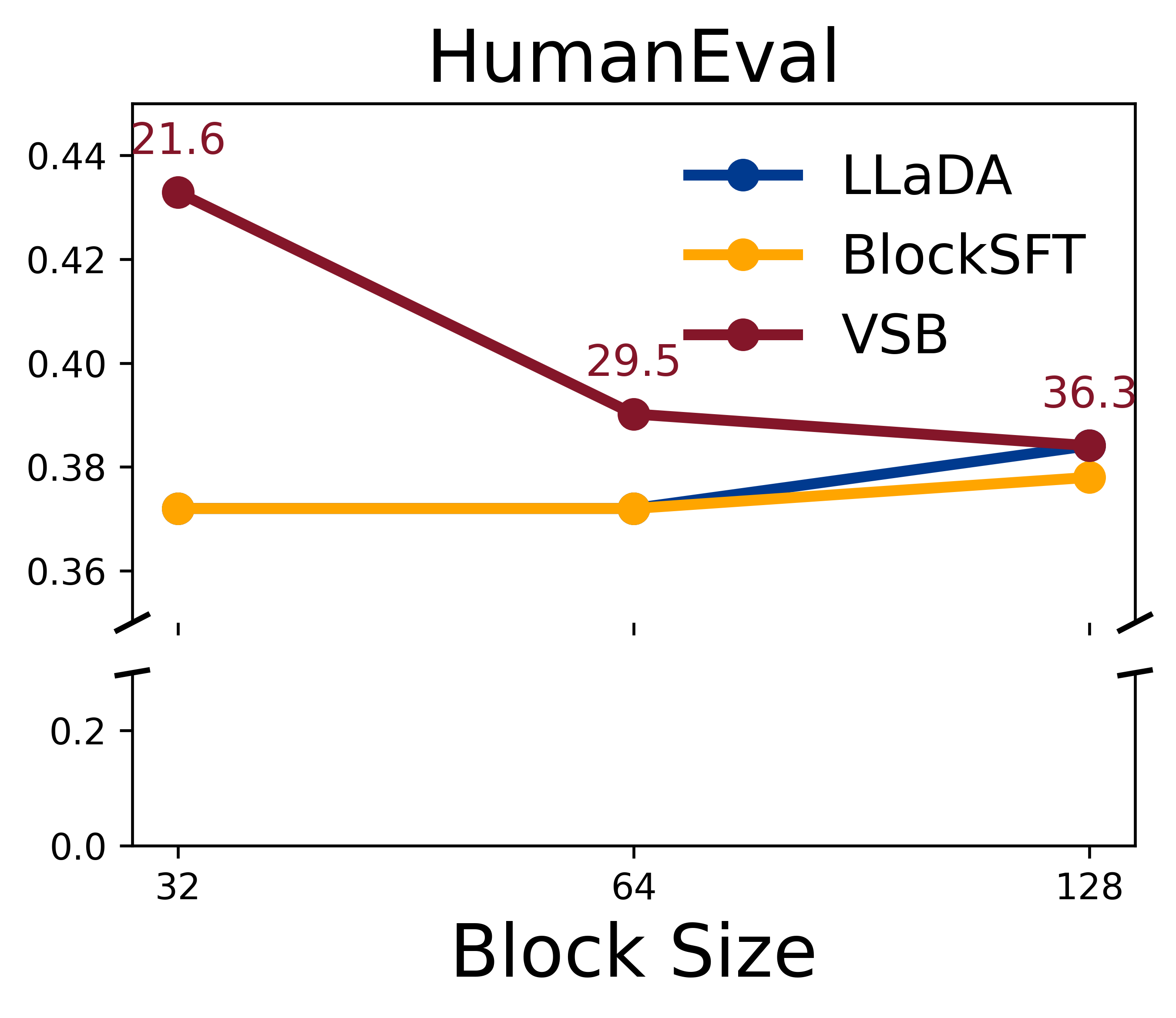}
    \vspace{-0.3cm}
    \caption{Accuracy vs. block window on GSM8K and HumanEval. The average block size of VSB is annotated in \textcolor{purple}{red}.}
    \label{fig:block_size_accuracy}
\end{figure}

\textbf{Coarse-to-fine boundary search preserves accuracy while speeding up decoding.}
Shown in Table~\ref{tab:vsb_c2f_ablation}, on MBPP, HellaSwag, and GPQA-Diamond, moderate stride settings maintain accuracy while reducing boundary evaluations and provide speedups by up to $1.6\times$. This indicates that VSB does not require an exhaustive scan over all boundaries in the budgeted block: approximate search recovers high-quality commitments without affecting self-containedness.

\begin{table}[!t]
\centering
\small
\setlength{\tabcolsep}{7pt}
\renewcommand{\arraystretch}{1.2}
\resizebox{\linewidth}{!}{
\begin{tabular}{l|cccc|cc|ccc}
\toprule
\multirow{2}{*}{\textbf{Metric}}
& \multicolumn{4}{c|}{\textbf{MBPP (64)}}
& \multicolumn{2}{c|}{\textbf{HellaSwag (10)}}
& \multicolumn{3}{c}{\textbf{GPQA-Diamond (64)}} \\

& Full
& $(8,2)$
& $(16,4)$
& $(32,8)$

& Full
& $(5,2)$

& Full
& $(8,2)$
& $(16,4)$\\
\midrule
Acc.
& 39.00 & 38.80 & 38.60 & 38.60
& 76.90 & 76.68 
& 28.79 & 28.79 & 28.79 \\

Speedup
& 1.0$\times$ & 1.5$\times$ & 1.6$\times$ & 1.6$\times$
& 1.0$\times$ & 1.2$\times$ 
& 1.0$\times$ & 1.4$\times$ & 1.5$\times$ \\
\bottomrule
\end{tabular}
}
\caption{Ablation of coarse-to-fine boundary search strides $(s_c, s_f)$ for VSB decoding.  Speedup is measured relative to the full boundary scan per dataset. Block budget is given in parenthesis.}
\vspace{-0.6cm}
\label{tab:vsb_c2f_ablation}
\end{table}
\section{Related Work}
\label{sec:related}
Our work relates to four research areas:
(1) \textbf{Discrete diffusion language models} denoise masked sequences under bidirectional attention, offering an alternative to AR generation. Representative models include LLaDA~\cite{nie2025llada,llada1_5} and Dream~\cite{dream}.
(2) \textbf{Blockwise semi-AR decoding} enables practical inference by committing tokens in semi-AR blocks rather than full-context parallel updates~\cite{Arriola2025BlockDiff,fastdllm_v2}.
(3) \textbf{Blockwise training alignment} (e.g., Blockwise-SFT) reduces the training-inference mismatch by aligning supervision with fixed-size block commitment~\cite{sun2025blockwisesft,Chen2025dParallel}.
(4) \textbf{Heuristic block selection} chooses block boundaries using fixed sizes, punctuation, or other heuristic/learned cues~\cite{Li2025DAEDAL,lu2025adablock}. VSB differs from prior heuristic boundary schemes by introducing an explicit \textit{self-containedness} criterion (Eq.~\eqref{eq:boundary_selection_rule}) that selects variable-size blocks based on predictive stability with and without future context. A detailed review is provided in Appendix~\ref{appendix:related_work}.

\section{Conclusion}
We introduced \textit{self-containedness} as a principled criterion for deciding when a block is safe to commit in blockwise decoding of discrete diffusion language models. Variable-size Self-contained Blocks (VSB) replaces fixed or heuristic boundaries with content-dependent decisions by measuring how predictions change when future context is revealed. VSB also aligns training and inference by training under the same \textit{No-Future} conditioning used at commitment time. Across reasoning, coding, and general knowledge benchmarks, VSB delivers consistent improvements over fixed-size and heuristic blockwise decoding. We hope to inspire future work on principled variable size block decoding.

\section{Impact Statement}
As a foundational study on discrete diffusion language models, we do not identify any direct negative societal impacts. Our research relies on publicly available datasets and models, all of which are properly acknowledged and pose no risks requiring safeguards. While potential societal implications may exist, none warrant specific emphasis in this study.

\section{Acknowledgment}
This research has been supported by Australian Research Council Discovery Projects (CE200100025,
DP230101196 and DE250100919).

\bibliography{vsb}

@article{sun2025blockwisesft,
  author       = {Bowen Sun and
                  Yujun Cai and
                  Ming{-}Hsuan Yang and
                  Yiwei Wang},
  title        = {Blockwise {SFT} for Diffusion Language Models: Reconciling Bidirectional
                  Attention and Autoregressive Decoding},
  journal      = {CoRR},
  year         = {2025},
  url          = {https://doi.org/10.48550/arXiv.2508.19529},
}

@article{lu2025adablock,
  author       = {Guanxi Lu and
                  Hao Mark Chen and
                  Yuto Karashima and
                  Zhican Wang and
                  Daichi Fujiki and
                  Hongxiang Fan},
  title        = {AdaBlock-dLLM: Semantic-Aware Diffusion {LLM} Inference via Adaptive
                  Block Size},
  journal      = {CoRR},
  year         = {2025},
  url          = {https://doi.org/10.48550/arXiv.2509.26432},

}

@article{blcokd2f,
  author       = {Xu Wang and
                  Chenkai Xu and
                  Yijie Jin and
                  Jiachun Jin and
                  Hao Zhang and
                  Zhijie Deng},
  title        = {Diffusion LLMs Can Do Faster-Than-AR Inference via Discrete Diffusion
                  Forcing},
  journal      = {CoRR},
  year         = {2025},
  url          = {https://doi.org/10.48550/arXiv.2508.09192},
}

@article{zhong2026beyond,
    title={Beyond Hard Masks: Progressive Token Evolution for Diffusion Language Models},
    author={Zhong, Linhao and Wu, Linyu and Fang, Bozhen and Feng, Tianjian and Jing, Chenchen and Wang, Wen and Zhang, Jiaheng and Chen, Hao and Shen, Chunhua},
    journal={arXiv preprint arXiv:2601.07351},
    year={2026}
}

@inproceedings{diffusion_lm_2022,
  author       = {Xiang Lisa Li and
                  John Thickstun and
                  Ishaan Gulrajani and
                  Percy Liang and
                  Tatsunori B. Hashimoto},
  title        = {Diffusion-LM Improves Controllable Text Generation},
  booktitle    = {NeurIPS},
  year         = {2022},
}

@inproceedings{diffusionbert,
  author       = {Zhengfu He and
                  Tianxiang Sun and
                  Qiong Tang and
                  Kuanning Wang and
                  Xuanjing Huang and
                  Xipeng Qiu},
  title        = {DiffusionBERT: Improving Generative Masked Language Models with Diffusion
                  Models},
  booktitle    = {ACL},
  year         = {2023},
}

@inproceedings{d3pm,
  author       = {Jacob Austin and
                  Daniel D. Johnson and
                  Jonathan Ho and
                  Daniel Tarlow and
                  Rianne van den Berg},
  title        = {Structured Denoising Diffusion Models in Discrete State-Spaces},
  booktitle    = {NeurIPS},
  year         = {2021},
}

@inproceedings{nie2025llada,
  author       = {Shen Nie and
                  Fengqi Zhu and
                  Zebin You and
                  Xiaolu Zhang and
                  Jingyang Ou and
                  Jun Hu and
                  Jun Zhou and
                  Yankai Lin and
                  Ji{-}Rong Wen and
                  Chongxuan Li},
  title        = {Large Language Diffusion Models},
  booktitle      = {NeurIPS},
  year         = {2025},
}

@inproceedings{Arriola2025BlockDiff,
  author       = {Marianne Arriola and
                  Aaron Gokaslan and
                  Justin T. Chiu and
                  Zhihan Yang and
                  Zhixuan Qi and
                  Jiaqi Han and
                  Subham Sekhar Sahoo and
                  Volodymyr Kuleshov},
  title        = {Block Diffusion: Interpolating Between Autoregressive and Diffusion
                  Language Models},
  booktitle    = {ICLR},
  year         = {2025},
}

@article{fastdllm,
  author       = {Chengyue Wu and
                  Hao Zhang and
                  Shuchen Xue and
                  Zhijian Liu and
                  Shizhe Diao and
                  Ligeng Zhu and
                  Ping Luo and
                  Song Han and
                  Enze Xie},
  title        = {Fast-dLLM: Training-free Acceleration of Diffusion {LLM} by Enabling
                  {KV} Cache and Parallel Decoding},
  journal      = {CoRR},
  year         = {2025},
  url          = {https://doi.org/10.48550/arXiv.2505.22618},
}

@article{Chen2025dParallel,
  author       = {Zigeng Chen and
                  Gongfan Fang and
                  Xinyin Ma and
                  Ruonan Yu and
                  Xinchao Wang},
  title        = {dParallel: Learnable Parallel Decoding for dLLMs},
  journal      = {CoRR},
  year         = {2025},
  url          = {https://doi.org/10.48550/arXiv.2509.26488},

}

@article{Xiong2025Stepwiser,
  author       = {Wei Xiong and
                  Wenting Zhao and
                  Weizhe Yuan and
                  Olga Golovneva and
                  Tong Zhang and
                  Jason Weston and
                  Sainbayar Sukhbaatar},
  title        = {StepWiser: Stepwise Generative Judges for Wiser Reasoning},
  journal      = {CoRR},
  year         = {2025},
  url          = {https://doi.org/10.48550/arXiv.2508.19229},

}

@article{Li2025DAEDAL,
  author       = {Jinsong Li and
                  Xiaoyi Dong and
                  Yuhang Zang and
                  Yuhang Cao and
                  Jiaqi Wang and
                  Dahua Lin},
  title        = {Beyond Fixed: Training-Free Variable-Length Denoising for Diffusion
                  Large Language Models},
  journal      = {CoRR},
  year         = {2025},
  url          = {https://doi.org/10.48550/arXiv.2508.00819},
}

@inproceedings{lora,
  author       = {Edward J. Hu and
                  Yelong Shen and
                  Phillip Wallis and
                  Zeyuan Allen{-}Zhu and
                  Yuanzhi Li and
                  Shean Wang and
                  Lu Wang and
                  Weizhu Chen},
  title        = {LoRA: Low-Rank Adaptation of Large Language Models},
  booktitle    = {ICLR},
  year         = {2022},
}

@article{lora_config,
  author       = {Dan Biderman and
                  Jacob P. Portes and
                  Jose Javier Gonzalez Ortiz and
                  Mansheej Paul and
                  Philip Greengard and
                  Connor Jennings and
                  Daniel King and
                  Sam Havens and
                  Vitaliy Chiley and
                  Jonathan Frankle and
                  Cody Blakeney and
                  John Patrick Cunningham},
  title        = {LoRA Learns Less and Forgets Less},
  journal      = {TMLR},
  year         = {2024},
}

@inproceedings{AdamW,
  author       = {Ilya Loshchilov and
                  Frank Hutter},
  title        = {Decoupled Weight Decay Regularization},
  booktitle    = {ICLR},
  year         = {2019},
}

@misc{dllm_repo,
    author = {Zhanhui Zhou and Lingjie Chen and Hanghang Tong and Dawn Song},
    title = {dLLM: Simple Diffusion Language Modeling},
    year = {2025},
    publisher = {GitHub},
    journal = {GitHub repository},
    howpublished = {\url{https://github.com/ZHZisZZ/dllm}},
}

@article{dllmvar,
  author       = {Yicun Yang and
                  Cong Wang and
                  Shaobo Wang and
                  Zichen Wen and
                  Biqing Qi and
                  Hanlin Xu and
                  Linfeng Zhang},
  title        = {Diffusion {LLM} with Native Variable Generation Lengths: Let {[EOS]}
                  Lead the Way},
  journal      = {CoRR},
  year         = {2025},
  url          = {https://doi.org/10.48550/arXiv.2510.24605},
}

@article{llada1_5,
  author       = {Fengqi Zhu and
                  Rongzhen Wang and
                  Shen Nie and
                  Xiaolu Zhang and
                  Chunwei Wu and
                  Jun Hu and
                  Jun Zhou and
                  Jianfei Chen and
                  Yankai Lin and
                  Ji{-}Rong Wen and
                  Chongxuan Li},
  title        = {LLaDA 1.5: Variance-Reduced Preference Optimization for Large Language
                  Diffusion Models},
  journal      = {CoRR},
  year         = {2025},
  url          = {https://doi.org/10.48550/arXiv.2505.19223},

}

@article{diffusion_survey,
  author       = {Tianyi Li and
                  Mingda Chen and
                  Bowei Guo and
                  Zhiqiang Shen},
  title        = {A Survey on Diffusion Language Models},
  journal      = {CoRR},
  year         = {2025},
  url          = {https://doi.org/10.48550/arXiv.2508.10875},
}

@article{diffusion_survey2,
  author       = {Runpeng Yu and
                  Qi Li and
                  Xinchao Wang},
  title        = {Discrete Diffusion in Large Language and Multimodal Models: {A} Survey},
  journal      = {CoRR},
  year         = {2025},
  url          = {https://doi.org/10.48550/arXiv.2506.13759},
}

@article{didi,
  author       = {Haoyang Zheng and
                  Xinyang Liu and
                  Cindy Xiangrui Kong and
                  Nan Jiang and
                  Zheyuan Hu and
                  Weijian Luo and
                  Wei Deng and
                  Guang Lin},
  title        = {Ultra-Fast Language Generation via Discrete Diffusion Divergence Instruct},
  journal      = {CoRR},
  year         = {2025},
  url          = {https://doi.org/10.48550/arXiv.2509.25035},
}

@misc{llada2.0,
      title={LLaDA2.0: Scaling Up Diffusion Language Models to 100B}, 
      author={Tiwei Bie and Maosong Cao and Kun Chen and Lun Du and Mingliang Gong and Zhuochen Gong and Yanmei Gu and Jiaqi Hu and Zenan Huang and Zhenzhong Lan and Chengxi Li and Chongxuan Li and Jianguo Li and Zehuan Li and Huabin Liu and Ling Liu and Guoshan Lu and Xiaocheng Lu and Yuxin Ma and Jianfeng Tan and Lanning Wei and Ji-Rong Wen and Yipeng Xing and Xiaolu Zhang and Junbo Zhao and Da Zheng and Jun Zhou and Junlin Zhou and Zhanchao Zhou and Liwang Zhu and Yihong Zhuang},
      year={2025},
      eprint={2512.15745},
      archivePrefix={arXiv},
      primaryClass={cs.LG},
      url={https://arxiv.org/abs/2512.15745}, 
}

@inproceedings{blockdecode_for_ar,
  author       = {Mitchell Stern and
                  Noam Shazeer and
                  Jakob Uszkoreit},
  editor       = {Samy Bengio and
                  Hanna M. Wallach and
                  Hugo Larochelle and
                  Kristen Grauman and
                  Nicol{\`{o}} Cesa{-}Bianchi and
                  Roman Garnett},
  title        = {Blockwise Parallel Decoding for Deep Autoregressive Models},
  booktitle    = {NeurIPS},
  year         = {2018},
}

@inproceedings{sat,
  author       = {Chunqi Wang and
                  Ji Zhang and
                  Haiqing Chen},
  editor       = {Ellen Riloff and
                  David Chiang and
                  Julia Hockenmaier and
                  Jun'ichi Tsujii},
  title        = {Semi-Autoregressive Neural Machine Translation},
  booktitle    = {EMNLP},
  year         = {2018},
}

@misc{efficeint_dlm,
      title={Efficient-DLM: From Autoregressive to Diffusion Language Models, and Beyond in Speed}, 
      author={Yonggan Fu and Lexington Whalen and Zhifan Ye and Xin Dong and Shizhe Diao and Jingyu Liu and Chengyue Wu and Hao Zhang and Enze Xie and Song Han and Maksim Khadkevich and Jan Kautz and Yingyan Celine Lin and Pavlo Molchanov},
      year={2025},
      eprint={2512.14067},
      archivePrefix={arXiv},
      primaryClass={cs.CL},
      url={https://arxiv.org/abs/2512.14067}, 
}

@article{anyorder_diff,
  author       = {Jaeyeon Kim and
                  Cheuk Kit Lee and
                  Carles Domingo{-}Enrich and
                  Yilun Du and
                  Sham M. Kakade and
                  Timothy Ngotiaoco and
                  Sitan Chen and
                  Michael S. Albergo},
  title        = {Any-Order Flexible Length Masked Diffusion},
  journal      = {CoRR},
  year         = {2025},
  url          = {https://doi.org/10.48550/arXiv.2509.01025},
}

@article{dream,
  author       = {Jiacheng Ye and
                  Zhihui Xie and
                  Lin Zheng and
                  Jiahui Gao and
                  Zirui Wu and
                  Xin Jiang and
                  Zhenguo Li and
                  Lingpeng Kong},
  title        = {Dream 7B: Diffusion Large Language Models},
  journal      = {CoRR},
  year         = {2025},
  url          = {https://doi.org/10.48550/arXiv.2508.15487},
}

@article{liu2025wedlm,
  title={WeDLM: Reconciling Diffusion Language Models with Standard Causal Attention for Fast Inference},
  author={Liu, Aiwei and He, Minghua and Zeng, Shaoxun and Zhang, Linhao and Wu, Chuhan and Jia, Wei and Liu, Yuan and Yu, Yang and Zhou, Xiao and Zhou, Jie},
  journal={arXiv preprint arXiv:2512.22737},
  year={2025}
}

@inproceedings{MDLM,
  author       = {Subham S. Sahoo and
                  Marianne Arriola and
                  Yair Schiff and
                  Aaron Gokaslan and
                  Edgar Marroquin and
                  Justin T. Chiu and
                  Alexander Rush and
                  Volodymyr Kuleshov},
  editor       = {Amir Globersons and
                  Lester Mackey and
                  Danielle Belgrave and
                  Angela Fan and
                  Ulrich Paquet and
                  Jakub M. Tomczak and
                  Cheng Zhang},
  title        = {Simple and Effective Masked Diffusion Language Models},
  booktitle    = {NeurIPS},
  year         = {2024},
}

@article{ni2025dllm_vs_ar,
  author       = {Jinjie Ni and
                  Qian Liu and
                  Longxu Dou and
                  Chao Du and
                  Zili Wang and
                  Hang Yan and
                  Tianyu Pang and
                  Michael Qizhe Shieh},
  title        = {Diffusion Language Models are Super Data Learners},
  journal      = {CoRR},
  year         = {2025},
  url          = {https://doi.org/10.48550/arXiv.2511.03276},
}

@article{soft_masked_dllm,
  author       = {Michael Hersche and
                  Samuel Moor{-}Smith and
                  Thomas Hofmann and
                  Abbas Rahimi},
  title        = {Soft-Masked Diffusion Language Models},
  journal      = {CoRR},
  year         = {2025},
  url          = {https://doi.org/10.48550/arXiv.2510.17206},
}

@article{fastdllm_v2,
  author       = {Chengyue Wu and
                  Hao Zhang and
                  Shuchen Xue and
                  Shizhe Diao and
                  Yonggan Fu and
                  Zhijian Liu and
                  Pavlo O. Molchanov and
                  Ping Luo and
                  Song Han and
                  Enze Xie},
  title        = {Fast-dLLM v2: Efficient Block-Diffusion {LLM}},
  journal      = {CoRR},
  year         = {2025},
  url          = {https://doi.org/10.48550/arXiv.2509.26328},
}

@article{dllmcache,
  author       = {Zhiyuan Liu and
                  Yicun Yang and
                  Yaojie Zhang and
                  Junjie Chen and
                  Chang Zou and
                  Qingyuan Wei and
                  Shaobo Wang and
                  Linfeng Zhang},
  title        = {dLLM-Cache: Accelerating Diffusion Large Language Models with Adaptive
                  Caching},
  journal      = {CoRR},
  year         = {2025},
  url          = {https://doi.org/10.48550/arXiv.2506.06295},
}

@article{seeddiffusion,
  author       = {Yuxuan Song and
                  Zheng Zhang and
                  Cheng Luo and
                  Pengyang Gao and
                  Fan Xia and
                  Hao Luo and
                  Zheng Li and
                  Yuehang Yang and
                  Hongli Yu and
                  Xingwei Qu and
                  Yuwei Fu and
                  Jing Su and
                  Ge Zhang and
                  Wenhao Huang and
                  Mingxuan Wang and
                  Lin Yan and
                  Xiaoying Jia and
                  Jingjing Liu and
                  Wei{-}Ying Ma and
                  Ya{-}Qin Zhang and
                  Yonghui Wu and
                  Hao Zhou},
  title        = {Seed Diffusion: {A} Large-Scale Diffusion Language Model with High-Speed
                  Inference},
  journal      = {CoRR},
  year         = {2025},
  url          = {https://doi.org/10.48550/arXiv.2508.02193},
}

@article{dimple,
  author       = {Runpeng Yu and
                  Xinyin Ma and
                  Xinchao Wang},
  title        = {Dimple: Discrete Diffusion Multimodal Large Language Model with Parallel
                  Decoding},
  journal      = {CoRR},
  year         = {2025},
  url          = {https://doi.org/10.48550/arXiv.2505.16990},
}

@article{gsm8k,
  author       = {Karl Cobbe and
                  Vineet Kosaraju and
                  Mohammad Bavarian and
                  Mark Chen and
                  Heewoo Jun and
                  Lukasz Kaiser and
                  Matthias Plappert and
                  Jerry Tworek and
                  Jacob Hilton and
                  Reiichiro Nakano and
                  Christopher Hesse and
                  John Schulman},
  title        = {Training Verifiers to Solve Math Word Problems},
  journal      = {CoRR},
  year         = {2021},
  url          = {https://arxiv.org/abs/2110.14168},
}

@inproceedings{original_math500,
  author       = {Dan Hendrycks and
                  Collin Burns and
                  Saurav Kadavath and
                  Akul Arora and
                  Steven Basart and
                  Eric Tang and
                  Dawn Song and
                  Jacob Steinhardt},
  title        = {Measuring Mathematical Problem Solving With the {MATH} Dataset},
  booktitle    = {NeurIPS Datasets and Benchmarks},
  year         = {2021},
}

@inproceedings{math500,
  author       = {Hunter Lightman and
                  Vineet Kosaraju and
                  Yuri Burda and
                  Harrison Edwards and
                  Bowen Baker and
                  Teddy Lee and
                  Jan Leike and
                  John Schulman and
                  Ilya Sutskever and
                  Karl Cobbe},
  title        = {Let's Verify Step by Step},
  booktitle    = {ICLR},
  year         = {2024},
}

@article{humaneval,
  author       = {Mark Chen and
                  Jerry Tworek and
                  Heewoo Jun and
                  Qiming Yuan and
                  Henrique Pond{\'{e}} de Oliveira Pinto and
                  Jared Kaplan and
                  Harri Edwards and
                  Yuri Burda and
                  Nicholas Joseph and
                  Greg Brockman and
                  Alex Ray and
                  Raul Puri and
                  Gretchen Krueger and
                  Michael Petrov and
                  Heidy Khlaaf and
                  Girish Sastry and
                  Pamela Mishkin and
                  Brooke Chan and
                  Scott Gray and
                  Nick Ryder and
                  Mikhail Pavlov and
                  Alethea Power and
                  Lukasz Kaiser and
                  Mohammad Bavarian and
                  Clemens Winter and
                  Philippe Tillet and
                  Felipe Petroski Such and
                  Dave Cummings and
                  Matthias Plappert and
                  Fotios Chantzis and
                  Elizabeth Barnes and
                  Ariel Herbert{-}Voss and
                  William Hebgen Guss and
                  Alex Nichol and
                  Alex Paino and
                  Nikolas Tezak and
                  Jie Tang and
                  Igor Babuschkin and
                  Suchir Balaji and
                  Shantanu Jain and
                  William Saunders and
                  Christopher Hesse and
                  Andrew N. Carr and
                  Jan Leike and
                  Joshua Achiam and
                  Vedant Misra and
                  Evan Morikawa and
                  Alec Radford and
                  Matthew Knight and
                  Miles Brundage and
                  Mira Murati and
                  Katie Mayer and
                  Peter Welinder and
                  Bob McGrew and
                  Dario Amodei and
                  Sam McCandlish and
                  Ilya Sutskever and
                  Wojciech Zaremba},
  title        = {Evaluating Large Language Models Trained on Code},
  journal      = {CoRR},
  year         = {2021},
  url          = {https://arxiv.org/abs/2107.03374},

}

@article{mbpp,
  author       = {Jacob Austin and
                  Augustus Odena and
                  Maxwell I. Nye and
                  Maarten Bosma and
                  Henryk Michalewski and
                  David Dohan and
                  Ellen Jiang and
                  Carrie J. Cai and
                  Michael Terry and
                  Quoc V. Le and
                  Charles Sutton},
  title        = {Program Synthesis with Large Language Models},
  journal      = {CoRR},
  year         = {2021},
  eprinttype    = {arXiv},
  url          = {https://arxiv.org/abs/2108.07732},

}

@article{mmlu,
  author       = {Dan Hendrycks and
                  Collin Burns and
                  Steven Basart and
                  Andy Zou and
                  Mantas Mazeika and
                  Dawn Song and
                  Jacob Steinhardt},
  title        = {Measuring Massive Multitask Language Understanding},
  journal      = {ICLR},
  year         = {2021},
}

@article{SDAR,
  author       = {Shuang Cheng and
                  Yihan Bian and
                  Dawei Liu and
                  Linfeng Zhang and
                  Qian Yao and
                  Zhongbo Tian and
                  Wenhai Wang and
                  Qipeng Guo and
                  Kai Chen and
                  Biqing Qi and
                  Bowen Zhou},
  title        = {{SDAR:} {A} Synergistic Diffusion-AutoRegression Paradigm for Scalable
                  Sequence Generation},
  journal      = {CoRR},
  year         = {2025},
  url          = {https://doi.org/10.48550/arXiv.2510.06303},
}

@misc{NB-DIFF,
      title={From Next-Token to Next-Block: A Principled Adaptation Path for Diffusion LLMs}, 
      author={Yuchuan Tian and Yuchen Liang and Jiacheng Sun and Shuo Zhang and Guangwen Yang and Yingte Shu and Sibo Fang and Tianyu Guo and Kai Han and Chao Xu and Hanting Chen and Xinghao Chen and Yunhe Wang},
      year={2025},
      eprint={2512.06776},
      archivePrefix={arXiv},
      url={https://arxiv.org/abs/2512.06776}, 
}

@inproceedings{mmlu_ethics,
  author       = {Dan Hendrycks and
                  Collin Burns and
                  Steven Basart and
                  Andrew Critch and
                  Jerry Li and
                  Dawn Song and
                  Jacob Steinhardt},
  title        = {Aligning {AI} With Shared Human Values},
  booktitle    = {ICLR},
  year         = {2021},
}

@inproceedings{hellaswag,
  author       = {Rowan Zellers and
                  Ari Holtzman and
                  Yonatan Bisk and
                  Ali Farhadi and
                  Yejin Choi},
  editor       = {Anna Korhonen and
                  David R. Traum and
                  Llu{\'{\i}}s M{\`{a}}rquez},
  title        = {HellaSwag: Can a Machine Really Finish Your Sentence?},
  booktitle    = {ACL},
  year         = {2019},
}

@article{GPQA-Diamond,
  author       = {David Rein and
                  Betty Li Hou and
                  Asa Cooper Stickland and
                  Jackson Petty and
                  Richard Yuanzhe Pang and
                  Julien Dirani and
                  Julian Michael and
                  Samuel R. Bowman},
  title        = {{GPQA:} {A} Graduate-Level Google-Proof Q{\&}A Benchmark},
  journal      = {CoRR},
  year         = {2023},
  eprinttype    = {arXiv},
  url          = {https://doi.org/10.48550/arXiv.2311.12022},

}
\bibliographystyle{icml2026}

\newpage
\appendix
\onecolumn
\section*{Appendix}

\section{Potential Limitations} \label{appendix:limitations}
The current study focuses on discrete diffusion language models with \textit{native} global bidirectional attention, where the \textit{No-Future} and \textit{Future-Aware} conditionals can be queried directly within a unified architecture. \textbf{Extending VSB to hybrid designs that combine an AR backbone with diffusion-style refinement (e.g., AR-initialised diffusion models) is not explored in this work.} Such models introduce additional interfaces between the AR proposal and bidirectional denoising that may alter the interpretation of self-containedness and the design of NF/FA probes. Adapting VSB to these settings (e.g., Dream~\cite{dream}, LLaDA 2.0~\cite{llada2.0}, FastdLLM v2~\cite{fastdllm_v2}, SDAR~\cite{sdar}, NBDiff~\cite{NB-DIFF} WeDLM~\cite{liu2025wedlm}-style pipelines) is left for future work. As described in main text, our formulation provides a clean reference implementation of self-contained decoding without complex engineering optimisations. \textbf{While introducing a novel self-contained variable-size block method with great performance, it indeed comes with additional computational cost.} In particular, VSB (Algorithm~\ref{alg:training}) introduces an explicit boundary selection step: for each block, candidate boundaries within a lookahead window are scored via NF/FA divergence, and the best boundary is selected by maximising the boundary selection objective Eq.~\eqref{eq:boundary_selection_rule}. When implemented as an exhaustive scan, this procedure evaluates all candidate boundaries within the window, which becomes costly when the block length is large. Nonetheless, \textbf{we have made the effort to provide a coarse-to-fine boundary search strategy} that first evaluates a sparse subset of boundaries and then refines the search locally around the most promising candidates, reducing the number of NF evaluations in practice while preserving most of the adaptivity benefits. The corresponding computational complexity improvement is provided in Section~\ref{sec:coarse-fine} 

Moreover, although prefix KV caching can improve efficiency in blockwise decoding, its implementation is less straightforward and partially applicable in the current VSB formulation. VSB explicitly compares \textit{No-Future} and \textit{Future-Aware} conditionals at multiple candidate boundaries, which requires evaluating the model under varying context truncations and can therefore limit cache reuse. In addition, under globally bidirectional attention, standard KV caching may alter the effective conditioning by freezing intermediate representations of previously processed tokens, potentially compromising the fidelity of NF/FA comparisons. Nevertheless, \textbf{we have made our best effort to adopt a future-aware activation caching strategy }that reuses intermediate hidden states from a full-context forward and recomputes only a small number of upper layers under masked attention. This approach preserves the semantics of self-containedness while providing meaningful speedups for boundary selection. Extending KV caching to support future-aware self-containedness tests is left as an engineering problem for future work.

\section{Proof of Technical Insights}

\subsection{Proof of Theorem~\ref{thm:consistency}} \label{proof:self_contained_commit}
\begin{proof}
For each position \(i \in (p,b^\star]\), let \(P_i^{\mathrm{NF}}\) and
\(P_i^{\mathrm{FA}}\) denote the predictive distributions under No-Future (NF)
and Future-Aware (FA) conditioning. We seek to bound their average discrepancy
within the block.

Consider the choice $\mathcal{D}(P,Q)=D_{\mathrm{KL}}(P\|Q)$, by Pinsker’s inequality:
$$
\delta_{\mathrm{TV}}\!\left(P_i^{\mathrm{NF}}, P_i^{\mathrm{FA}}\right)
\le
\sqrt{\tfrac{1}{2}\,
D_{\mathrm{KL}}\!\left(P_i^{\mathrm{NF}}\|P_i^{\mathrm{FA}}\right)}.
$$
Given that the square-root $\sqrt{.}$ function is concave, applying the Jensen’s inequality and  averaging over positions \(i=p+1,\ldots,b^\star\), we have:
$$
\frac{1}{b^\star-p}\sum_{i=p+1}^{b^\star}
\delta_{\mathrm{TV}}\!\left(P_i^{\mathrm{NF}}, P_i^{\mathrm{FA}}\right)
\le
\sqrt{
\frac{1}{2(b^\star-p)}
\sum_{i=p+1}^{b^\star}
D_{\mathrm{KL}}\!\left(P_i^{\mathrm{NF}}\|P_i^{\mathrm{FA}}\right)
}.
$$
By definition of \(\varepsilon\)-self-containedness:
$$
\frac{1}{b^\star-p}
\sum_{i=p+1}^{b^\star}
D_{\mathrm{KL}}\!\left(P_i^{\mathrm{NF}}\|P_i^{\mathrm{FA}}\right)
\le
\varepsilon.
$$
Substituting this bound gives:
$$
\frac{1}{b^\star-p}\sum_{i=p+1}^{b^\star}
\delta_{\mathrm{TV}}\!\left(P_i^{\mathrm{NF}}, P_i^{\mathrm{FA}}\right)
\le
\sqrt{\tfrac{\varepsilon}{2}}
=
\mathcal{O}(\sqrt{\varepsilon}).
$$
Thus, when a block is \(\varepsilon\)-self-contained, revealing future context would only change the model's token-level predictive distributions within that block by at most $\mathcal{O}(\sqrt{\varepsilon})$ on average. This formalises the notion that self-contained blocks are approximately future-independent.
\end{proof}

\subsection{Proof of Proposition~\ref{prop:fixed_length_characterisation}}
\label{proof:fixed_length_characterisation}
\begin{proof} Given a fixed-size block commitment with length $W>0$.
($\Rightarrow$) If all committed blocks satisfy $S(p,b)\le\varepsilon$, then in
particular each committed block $(p,p+W]$ satisfies $S(p,p+W)\le\varepsilon$.

($\Leftarrow$) If $S(p,p+W)\le\varepsilon$ holds for every committed block
$(p,p+W]$, then the strategy satisfies the condition for all committed blocks.
\end{proof}

\section{Extended Related Work} \label{appendix:related_work}
\paragraph{Discrete diffusion language models.}
Diffusion-based language modeling presents a promising alternative to the left-to-right token generation with an iterative denoising process that predicts masked tokens under bidirectional attention~\cite{diffusion_survey,nie2025llada}. This paradigm enables parallel token updates, arbitrary-order refinement, and natural support for infilling. Early work such as Diffusion-LM studied continuous or discrete diffusion for controllable text generation \cite{diffusion_lm_2022,d3pm,diffusionbert,MDLM}, motivating subsequent scaling of diffusion objectives to large Transformer backbones. Recent discrete dLLMs, notably LLaDA \cite{nie2025llada} and Dream \cite{dream}, demonstrate that masked discrete diffusion training can reach competitive capability at the multi-billion scale compare to auto-regressive models~\cite{liu2025wedlm,llada2.0}. A key practical issue is efficiency and length control. Standard diffusion decoding typically requires many denoising iterations and often assumes a fixed generation length. Several lines of work address these constraints~\cite{diffusion_survey2,seeddiffusion,dllmcache,zhong2026beyond}. DAEDAL proposes a training-free strategy that expands length by inserting mask tokens when the current allocation appears insufficient, reducing wasted computation from overly long allocations while avoiding truncation from short ones~\cite{Li2025DAEDAL}. dLLM-Var instead modifies training to predict \texttt{[EOS]} so that variable-length generation is native to the model, while still supporting blockwise diffusion-style inference~\cite{dllmvar}. On acceleration, D2F converts a pretrained dLLM into an AR-diffusion hybrid through a teacher-student distillation process that distill the teacher full bidirectional attention predictions into a student model with blockwise causal attention.~\cite{blcokd2f}, while DiDi-Instruct distills a pretrained masked dLLM into a faster student with substantially fewer denoising steps~\cite{didi}. Our focus is orthogonal to these methods by target the \textit{training-inference mismatch} created by combining bidirectional diffusion training with blockwise semi-AR commitment, and this is achieved by making block boundaries \textit{variable} under our proposed \textit{self-containedness} principle.

\paragraph{Blockwise semi-autoregressive decoding.}
In practice, many dLLMs decode using a blockwise semi-autoregressive procedure: tokens are denoised in parallel within a window, then a prefix block is committed and the window shifts forward~\cite{Arriola2025BlockDiff,efficeint_dlm,fastdllm_v2}. This pattern benefits from KV caching and provides a favorable trade-off between latency and quality, but it introduces an additional design choice: where to place boundaries~\cite{fastdllm,lu2025adablock,dllmcache}. Historically, blockwise and semi-autoregressive decoding has been studied for autoregressive models and sequence-to-sequence generation like Blockwise parallel decoding and SAT~\cite{blockdecode_for_ar,sat}. For dLLMs, the boundary choice interacts strongly with bidirectional attention: tokens inside a block can exploit within-block context during denoising, but committed tokens become immutable and must remain consistent with later context. Fixed-size blocks, punctuation heuristics, or other surface rules can therefore commit tokens that would have been revised once future context becomes visible, propagating inconsistencies across future blocks~\cite{Chen2025dParallel,sun2025blockwisesft}. Our method formalises this risk through an explicit \textit{self-containedness} commitment criterion (Eq.~\eqref{eq:self_contain_score}), and selects variable-size blocks by trading off commit length against self-containedness. This differs from the fixed-size blocks or heuristic boundaries approach, where our the goal is to ensure that committed content are self-contained under the semantics available at commitment time without needing future available context to be refined.

\paragraph{Blockwise training alignment.}
A central challenge is that conventional diffusion training and SFT mask tokens across the whole input response, whereas blockwise semi-AR inference commits blocks without seeing the future~\cite{Arriola2025BlockDiff}. This discrepancy can yield a mismatch in conditioning semantics: training gradients may encourage token predictions that implicitly depend on future tokens that will not be available when the block is committed~\cite{fastdllm_v2,sun2025blockwisesft,Chen2025dParallel,zhong2026beyond}. Blockwise SFT targets this mismatch by aligning fine-tuning with fixed-size blockwise decoding through training on a selected active block while masking all future tokens~\cite{sun2025blockwisesft}. This directly matches the inference-time information pattern for a chosen block size. While VSB shares the objective of aligning training to blockwise inference, it differs in how the blocks were principally chosen. Rather than assuming a fixed block length, we introduce an explicit self-containedness criterion that guides \textit{variable-size} blocks. Concretely, we evaluate candidate boundaries by comparing predictions for the current decoding region under \textit{future-aware} versus \textit{no-future} conditioning. Training then applies supervision under the same no-future truncation semantics implied by the selected boundary, reducing mismatch not just for a single fixed block size but for a \textit{content-dependent} boundary policy. In this sense, our method can be viewed as complementing Blockwise SFT, where we aim for an effective boundary selection driven by \textit{self-containedness}.

\paragraph{Semantic-aware block generation.}
Recent work has challenged the fixed block size assumption and proposed semantic-aware methods to guide generation~\cite{Li2025DAEDAL,dllmvar,anyorder_diff,Xiong2025Stepwiser}. AdaBlock-dLLM presents a training-free approach that uses confidence dynamics of semantic-associated tokens like punctuations during denoising to adjust block size and better align boundaries with local semantic structure \cite{lu2025adablock}. Related ideas also appear in multimodal discrete diffusion models, where Dimple provides a ``confident decoding'' strategy that unmasks tokens when confidence exceeds a threshold, reducing the number of iterations needed in practice \cite{dimple}. While these methods adapt decoding based on confidence heuristics, they typically do not directly enforce that a committed block would remain resolved once future context is revealed. Our method is aligned with the broader motivation of semantic coherence, but provides a different mechanism. Instead of defining steps by prompts, punctuation, or confidence-only rules, we test self-containedness via predictive consistency with and without future context. This yields a principled boundary signal that can be used both at inference time for variable-size commitment and at training time for alignment under the same no-future semantics.

\section{Extended Dataset Description} \label{appendix:Extended_dataset_description}

\paragraph{\textsc{GSM8K}~\cite{gsm8k}} GSM8K is a benchmark of grade-school level math word problems created to test multi-step reasoning and arithmetic skills in language models. It contains a total of approximately 8,500 problems with human-written solutions requiring elementary calculation steps to reach a numerical answer. The test split consists of 1,319 test examples. We evaluate under the 0-shot scenario.

\paragraph{\textsc{MATH500}~\cite{math500}} MATH-500 is a curated subset of 500 competition-style math problems drawn from the broader MATH benchmark~\cite{original_math500} that was used in OpenAI's work~\cite{math500}. Each problem is intended to challenge models' mathematical reasoning by requiring multi-step solution strategies that go beyond simple arithmetic. We evaluate under the 4-shot scenario.

\paragraph{\textsc{MMLU Generative}~\cite{mmlu,mmlu_ethics}} MMLU generative is the generative adaptation of the Massive Multitask Language Understanding (MMLU) benchmark, which spans a wide range of academic subjects across STEM, humanities, and social sciences. The underlying benchmark consists of multiple-choice questions across 57 subjects. The generative is a generation variant where the model is asked to generate the answer letter. We evaluate under the 0-shot scenario.

\paragraph{\textsc{HumanEval}~\cite{humaneval}} HumanEval is a code generation benchmark designed to evaluate Python function synthesis from docstrings. It contains 164 programming problems, each with a natural language ``docstring" specification and accompanying unit tests. The test set consists of 164 questions, with correctness determined by executing generated code against the groundtruth. We evaluate under the 0-shot scenario.

\paragraph{\textsc{MBPP}~\cite{mbpp}} MBPP (Mostly Basic Programming Problems) is a coding dataset of Python programming questions aimed at assessing fundamental coding abilities. The testset includes 500 short programming tasks. We use the instruction following variant of this dataset with 3-shot for evaluation.

\paragraph{\textsc{GPQA-Diamond}~\cite{GPQA-Diamond}} GPQA-Diamond is a difficult subset of the Graduate-Level Google-Proof Q\&A (GPQA) benchmark that focuses on expert-validated, high-difficulty multiple-choice science questions. The Diamond subset contains 198 questions, representing the hardest portion of the larger GPQA set. We evaluate under the 0-shot scenario.

\paragraph{\textsc{HellaSwag}~\cite{hellaswag}} HellaSwag is a commonsense reasoning benchmark that asks models to select the most suitable followup given a narrative context. The dataset is structured for multiple-choice evaluation and contains 10,042 test examples. We evaluate under the 0-shot scenario.

\section{Implementation Details}\label{appendix:implementation_details}
We use the {\fontfamily{qcr}\selectfont lm-evaluation-harness} framework (v0.4.10.dev0 )~\footnote{\url{https://github.com/EleutherAI/lm-evaluation-harness/tree/main}} for an unified evaluation of publicly available datasets. Experiments were conducted using Python 3.12.11, PyTorch 2.7.0+gitf717b2a (ROCm compatible) and ROCm v6.4.3 with AMD Instinct MI300X GPUs. LLaDA-8B-Instruct~\cite{nie2025llada} is used as the primary pretrained backbone for all training or post-hoc baselines. For VSB, we additionally report results on both LLaDA-1.5~\cite{llada1_5}. Dataset-specific token budgets, block sizes, and few-shot settings follow standard settings adopted in prior work~\cite{dllm_repo, nie2025llada,lu2025adablock,Arriola2025BlockDiff} and are reported in Table~\ref{tab:main_results_transposed}. During training, we sweep the block length $W \in \{32, 64, 96\}$ with default $W = 96$. To ensure fair comparison, the block length $W$ at inference time is matched to the block size used by the baselines. The coarse and fine strides are set to $s_c = \min($token budget, $16)$, and $s_f = 4$, respectively. We use the medium step to construct the hypothesis (e.g., $t_\text{hyp} = \lfloor{T / 2}\rfloor$). For models that require supervised fine-tuning, we use the training split of the {\fontfamily{qcr}\selectfont openai/gsm8k} dataset. We apply LoRA fine-tuning for training based methods under standard configuration presented in prior works~\cite{sun2025blockwisesft,lora,lora_config}. VSB is trained with the AdamW optimiser~\cite{AdamW} and cosine learning-rate schedule, using bf16 precision. Inconsistencies of results with the original reported statistics may stem from dataset configuration, software environment differences, and unavailable hyperparameters. We have reproduced the results to the best of our ability while maintaining a consistent evaluation configuration across the methods.

\section{Algorithms} \label{appendix:algorithms}
To facilitate understanding of VSB, several algorithms are presented. Algorithm~\ref{alg:training} shows the \textit{No-Future} alignment training process of VSB, which aligns the global bidirectional attention training with block-wise semi-AR inference conditioning. Algorithm~\ref{alg:vsb_inference} demonstrates the inference of VSB (non-accelerated full boundary scan version), while Algorithm~\ref{alg:vsb_coarse_to_fine} illustrates the boundary selection optimised coarse-to-fine search.

\begin{figure*}[t]
\centering

\begin{minipage}[t]{0.48\textwidth}
\begin{algorithm}[H]
\caption{VSB Training with \textit{No-Future} Alignment}
\label{alg:training}
\begin{algorithmic}[1]
\STATE {\bfseries Input:} Input sequences $x$, dLLM $p_\theta$, diffusion steps $t \in \{1,\dots,T\}$, block length $W$
\WHILE{not converged}
    \STATE Sample prefix end $p \sim \mathrm{Unif}\{1,\dots,L-1\}$
    \STATE Set window end $w \leftarrow \min(p+W, L)$    
    \STATE Define candidate boundaries $\mathcal{B} = \{b \mid p < b \le w\}$

    \STATE \textbf{(A) Construct scoring hypotheses}
    \STATE Set $\hat{x}_{1:p} \leftarrow x_{1:p}$
    \STATE Initialise $\hat{x}_{p+1:w} \leftarrow \text{Masked}$
    \STATE Construct window hypotheses:
    $
    \hat{x}_{p+1:w} \leftarrow
    \text{Eq.~\eqref{eq:tr_hypothesis}}
    $

    \STATE \textbf{(B) Compute \textit{Future-Aware} prediction}
    \STATE $f^{\mathrm{FA}}_i \leftarrow p_\theta(\cdot \mid \hat{x}_{1:w})$ for all $i \in (p, w]$

    \STATE \textbf{(C) Boundary scoring}
    \FOR{each $b \in \mathcal{B}$}
        \STATE $f^{\mathrm{NF}}_i(b) \leftarrow p_\theta(\cdot \mid \hat{x}_{1:b})$ for $i \in (p, b]$
        \STATE $S(p,b) \leftarrow$ Eq.~\eqref{eq:self_contain_score}
    \ENDFOR
    \STATE Select boundary $b^{\star} \leftarrow$ Eq.~\eqref{eq:boundary_selection_rule}

    \STATE \textbf{(D) No-Future diffusion training}
    \STATE Corrupt tokens in $(p, b^{\star}]$ to obtain $\tilde{x}^{t}_{1:b^{\star}}$
    \STATE Compute \textit{No-Future} loss:
    $
    \mathcal{L}_{\mathrm{NF}} \leftarrow
    \text{Eq.~\eqref{eq:final_nf_diff_loss}}
    $
    \STATE Update parameters:
    $
    \theta \leftarrow \theta - \eta \nabla_\theta \mathcal{L}_{\mathrm{NF}}
    $
\ENDWHILE
\end{algorithmic}
\end{algorithm}
\end{minipage}
\hfill
\begin{minipage}[t]{0.48\textwidth}
\begin{algorithm}[H]
\caption{VSB Inference}
\label{alg:vsb_inference}
\begin{algorithmic}[1]
\STATE {\bfseries Input:} Prompt $x_{1:p}$, dLLM $p_\theta$, denoising steps $t \in \{T,\dots,1\}$, block length $W$, hypothesis step $t_{\text{hyp}}$
\STATE Initialise $p \leftarrow \ell_{\text{prompt}}$
\WHILE{$p < L$}
    \STATE Set $w \leftarrow \min(p + W, L)$
    \STATE Initialise $(p,w]$ as masked
    \STATE Run diffusion from timestep $t=T$ to $t=1$ to obtain $\hat{x}_{1:w}$ and cache hypothesis $\hat{x}^{t_{\text{hyp}}}_{1:w}$ at step $t_{\text{hyp}}$
    \STATE Compute FA logits on $(p,w]$ using $\hat{x}^{t_{\text{hyp}}}_{1:w}$
    \STATE
    \STATE \# \textit{Boundary Selection}
    \FOR{each $b \in \mathcal{B}(p,w)$}
        \STATE Compute NF logits by truncation to $b$
        \STATE Compute $S(p,b) \leftarrow$ Eq.~\eqref{eq:self_contain_score}
    \ENDFOR
    \STATE Select $b^\star$ by Eq.~\eqref{eq:boundary_selection_rule}
    \STATE Commit $\hat{x}_{p+1:b^\star}$
    \STATE Update $p \leftarrow b^\star$
\ENDWHILE
\STATE \textbf{return} generated sequence
\end{algorithmic}
\end{algorithm}
\end{minipage}

\end{figure*}

\begin{algorithm}[H]
\caption{VSB Boundary Selection via Coarse-to-Fine Search}
\label{alg:vsb_coarse_to_fine}
\begin{algorithmic}[1]
\STATE {\bfseries Input:} Prefix end $p$, window end $w$, coarse stride $s_c$, fine stride $s_f$
\STATE \textit{\textbf{1. Coarse search sparsely scans the window to locate a promising boundary region}}
\STATE Construct coarse candidate set $\mathcal B_c \leftarrow \{p+1+s_c, p+1+2s_c, \dots, w\}$
\FOR{each $b \in \mathcal B_c$}
    \STATE Compute self-containedness $S(p,b) \leftarrow$ Eq.~\eqref{eq:self_contain_score}
\ENDFOR
\STATE Select coarse optimum $\tilde{b}$ from $\mathcal B_c$ by Eq.~\eqref{eq:boundary_selection_rule}
\STATE
\STATE \textit{\textbf{2. Fine search refines the boundary choice around the best coarse candidate}}
\STATE Define fine neighborhood $\mathcal B_f \leftarrow [\tilde{b}-s_c, \tilde{b}+s_c] \cap (p, w]$ with stride $s_f$
\FOR{each $b \in \mathcal B_f$}
    \STATE Compute self-containedness $S(p,b) \leftarrow$ Eq.~\eqref{eq:self_contain_score}
\ENDFOR
\STATE Select final boundary $b^\star$ by Eq.~\eqref{eq:boundary_selection_rule}
\STATE \textbf{return} $b^\star$
\end{algorithmic}
\end{algorithm}

\newpage
\section{Case Study 1: VSB vs. Fixed-size block Baseline} \label{appendix:case_study1}
We provide a case study in Figure~\ref{fig:case_study_raw} comparing the generation output and quality between our VSB and Fixed-size block decoding. Below is a detailed step by step analysis of the outputs with respect to a given question in GSM8K test set:

\textit{Question: According to its nutritional info, a bag of chips has 250 calories per serving. If a 300g bag has 5 servings, how many grams can you eat if your daily calorie target is 2000 and you have already consumed 1800 calories?}

\paragraph{Problem structure and sensitivity to segmentation.}
The question involves a short chain of dependent steps. First, the total calories in the bag are computed from the number of servings and calories per serving. Next, the remaining daily calorie budget is obtained from the target and the calories already consumed. Finally, this remaining budget is converted into a fraction of the bag and then into grams using the bag weight. The full reasoning chain can be seen as:
$$
5 \times 250 = 1250,\qquad
2000 - 1800 = 200,\qquad
\frac{200}{1250} \times 300 = 48,
$$
where each intermediate result is needed by the next step. If numbers, units, or equations are broken across blocks, the error can easily propagate to the final answer. This makes the example highly sensitive to how decoding blocks are segmented.
\paragraph{VSB decoding behavior.}
With VSB decoding, block boundaries are chosen based on semantic and logical completeness. In this case, each block ends after a complete clause or a finished calculation. For instance, illustrated in top panel of Figure~\ref{fig:case_study_raw}, the remaining calorie computation,
$
2000 - 1800 = 200,
$
appears entirely within a single block (Block 6). Likewise, the fraction:
$
\frac{200}{1250}
$
and its simplified form:
$
\frac{4}{25}
$
are kept intact before converting to grams. Because equations and explanations are not split mid-way, each block remains easy to interpret on its own. This allows the final conversion 
$
\frac{4}{25} \times 300 = 48
$
to be carried out correctly.

\paragraph{Numerical and unit consistency under VSB.}
VSB also preserves correct associations between numbers and their meanings. Calories per serving, total bag calories, remaining calories, and bag weight remain clearly separated. In particular, the $300$ grams value is correctly treated as the weight of the whole bag, not a serving. This behavior is consistent with the low average self-containedness divergence observed for VSB,
$$
\mathrm{Avg}\, S(p,b^\star) = 0.055,
$$
indicating that blocks can be understood without relying on future context.

\paragraph{Fixed-size decoding and fragmentation effects.}
In contrast, fixed-size decoding with a $64$-token budget places boundaries without considering semantic structure. Shown in the bottom panel of Figure~\ref{fig:case_study_raw}, blocks break in the middle of equations and sentences. One boundary interrupts the computation (Block 2-3):
$$
5 \times 250 = 1250,
$$
while another splits an explanatory sentence into incomplete parts  (Block 4-5). These fragmented blocks are harder to interpret and depend on later tokens to recover meaning. This is reflected in a higher self-containedness divergence,
$$
\mathrm{Avg}\, S(p,b) = 0.25.
$$
The fragmentation leads to a \textbf{clear error} later in the reasoning. The model incorrectly treats $300$ grams as the weight of a serving rather than the whole bag. This unit confusion directly affects the grams conversion and produces an incorrect final answer.

\begin{figure} [h!]
    \centering
    \includegraphics[width=0.8\linewidth]{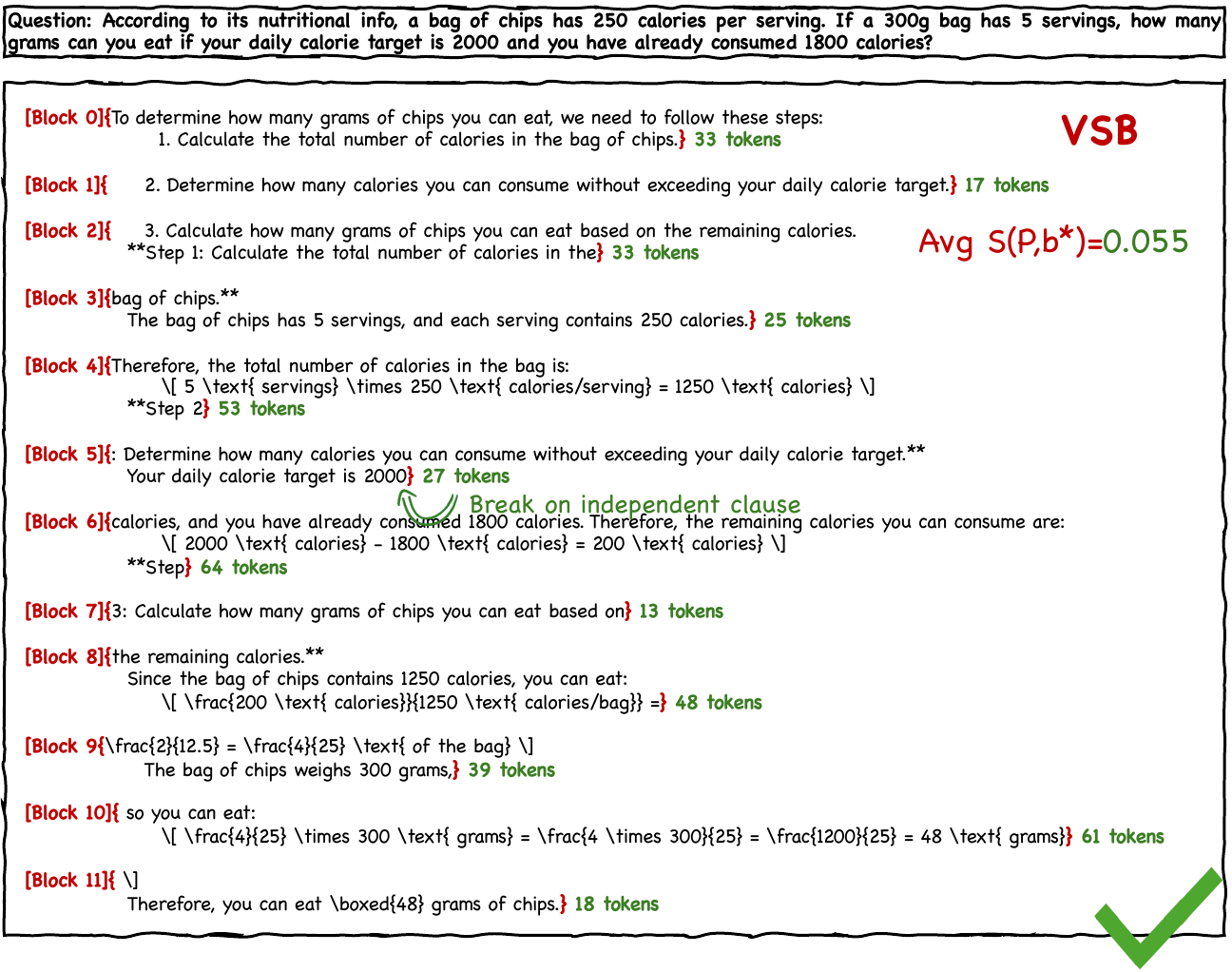}
    \includegraphics[width=0.8\linewidth]{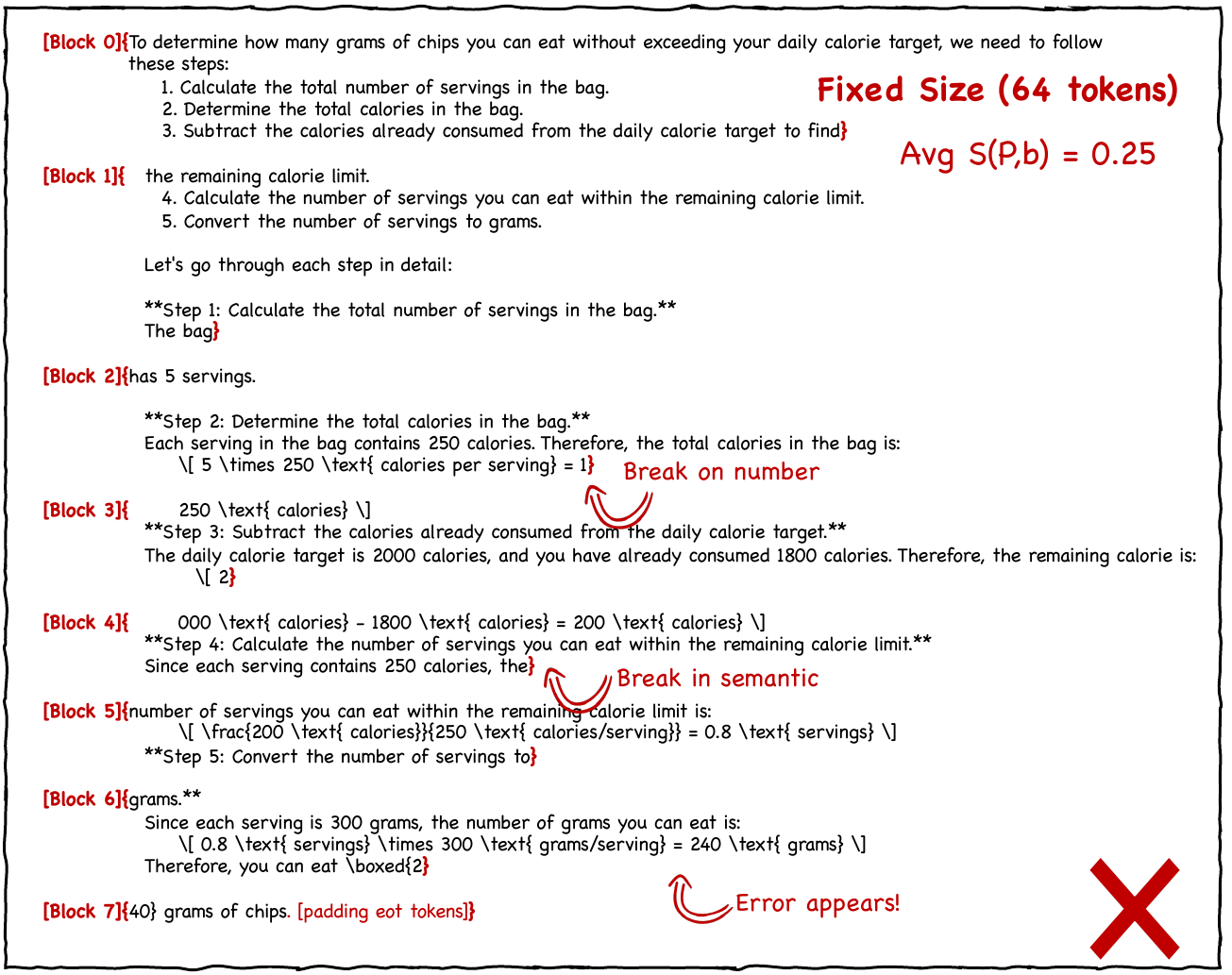}
    \caption{Case study of VSB vs. Fixed-size block baseline (64-tokens). VSB produces lower self-containedness divergence with more semantically coherent blocks at various sizes. Fixed-size blocks results in larger self-containedness divergence and blocks breaks on key number, semantics with errors in the final result.}
    \label{fig:case_study_raw}
\end{figure}

\section{Case Study 2: Self-containedness Guided Block Selection} \label{appendix:case_study2}
Figures~\ref{fig:casestudy2_overview},~\ref{fig:casestudy2_fullset_1}, and ~\ref{fig:casestudy2_fullset_2} provide a full decoded-token trace showing how VSB repeatedly applies the self-contained boundary rule across an entire solution. At each prefix position $p$, VSB evaluates candidate cutoffs $b \in (p, p+W]$ (here with a fixed window budget $W=64$), scores each candidate by self-containedness, and then selects the boundary $b^\star$ using a length-aware trade-off rather than the raw minimum of the divergence. Figures~\ref{fig:casestudy2_overview} provides an overview of case study 2's question and VSB's response annotated in blocks, as well as the overall self-containedness divergence across all blocks.

\paragraph{Overall pattern: boundaries align with semantic completion.}
Shown Figures~\ref{fig:casestudy2_fullset_1} and~\ref{fig:casestudy2_fullset_2} , across Blocks 0-20, the selected cutoffs repeatedly coincide with visually complete units in the decoded tokens. When the text forms a complete thought (e.g., finishing a sentence, closing an equation, completing an assignment like ``$a{=}1$''), the divergence for candidates ending at that point is low, and $S_{\text{L-A}}(p,b)$ peaks there. When the text is mid-construction (e.g., an unfinished math environment or an incomplete formula), $S(p,b)$ rises for longer candidates, and the trade-off selects an earlier cutoff that avoids committing future-dependent content.

A key qualitative observation across the blocks in Figures~\ref{fig:casestudy2_fullset_1} and ~\ref{fig:casestudy2_fullset_2} is that spikes in $S(p,b)$ coincide with tokens that clearly require continuation (unfinished equations, half-open LaTeX structures, or phrases that introduce content not yet stated). VSB responds by placing the boundary just before those unstable regions, so the next block can incorporate the missing future context before committing.

\paragraph{Where VSB looks strongest in this example.}
\begin{enumerate}
    \item \textbf{It separates stable prose from unstable math.} Early blocks show VSB committing short-to-medium spans that finish a statement, then opening the next block for the upcoming construction (Blocks 0-2). This matches the intended behavior: prose clauses are often self-contained, while upcoming math (definitions, substitutions, quadratic-formula expansions) creates future dependence until the expression is closed.
    \item \textbf{It avoids committing incomplete equations.} Throughout Blocks 3-14, the decoded content alternates between explanatory text and symbolic expansions. Candidate boundaries that would cut inside a formula tend to have higher divergence, while candidates that end at a natural closure (end of a displayed equation, after the full numerator/denominator is formed, after ``Thus,'' when the next line provides the result) yield lower divergence and therefore higher $S_{\text{L-A}}(p,b)$. This is precisely the behavior described in the main text: tokens beyond $b^\star$ are unstable because their meaning depends on the continuation from the next block.
    \item \textbf{It remains consistent across the whole generation.} In Blocks 15-20, the same pattern continues: VSB commits a self-contained definition of ``roots of unity", then postpones the claim about ``6th roots'' until the supporting context is in place. The final block includes the concluding sentence and answer, which is naturally self-contained and corresponds to low divergence near the end.
\end{enumerate}

\paragraph{Failure mode analysis (constructively).}
The main weakness appeared with VSB in the trace is that \textbf{some blocks are still very short} (e.g., 3-9 tokens in several places). Importantly, these short blocks are not arbitrary: they occur at positions where the decoded tokens themselves are genuinely small self-contained units, such as
\begin{itemize}[leftmargin=*,labelsep=0.0em,itemsep=0pt,parsep=0pt,topsep=0pt]
    \item local parameter assignments or short enumerations (e.g., ``$b{=}1$'', ``$c{=}1$'');
    \item LaTeX control-sequence glue where committing a longer span would cross into a future-dependent region (opening a fraction/sqrt that will only be completed by later tokens);
    \item short connective phrases that immediately precede a construction with high divergence (where longer candidates incur a large $(b-p)S(p,b)$ penalty).
\end{itemize}
So, even when a block is short, it is typically \emph{still self-contained}: committing it does not freeze an interpretation that depends on unseen continuation.

That said, \textbf{short blocks do have a practical cost}: they reduce parallelism and therefore limit speedups in exactly the regions where the model is most ``structure-sensitive'' (dense math or heavy formatting). In this case study, the shortest blocks appear clustered around the most syntactically constrained parts of the solution (quadratic formula expansion, nested radicals/fractions, and equation delimiters). This suggests a principled explanation: when the generation enters a highly future-dependent micro-region, the length-aware objective prefers to commit only the stable prefix of that region, rather than forcing a longer but unstable commitment. This issue can be potentially resolved by defining a ``minimum commit" value, nonetheless, we leave this as future practical tips instead of a fundamental analysis on the benefit of self-containedness.

\begin{figure}[t]
    \centering
    \begin{subfigure}[t]{0.8\linewidth}
        \centering
        \includegraphics[width=\linewidth]{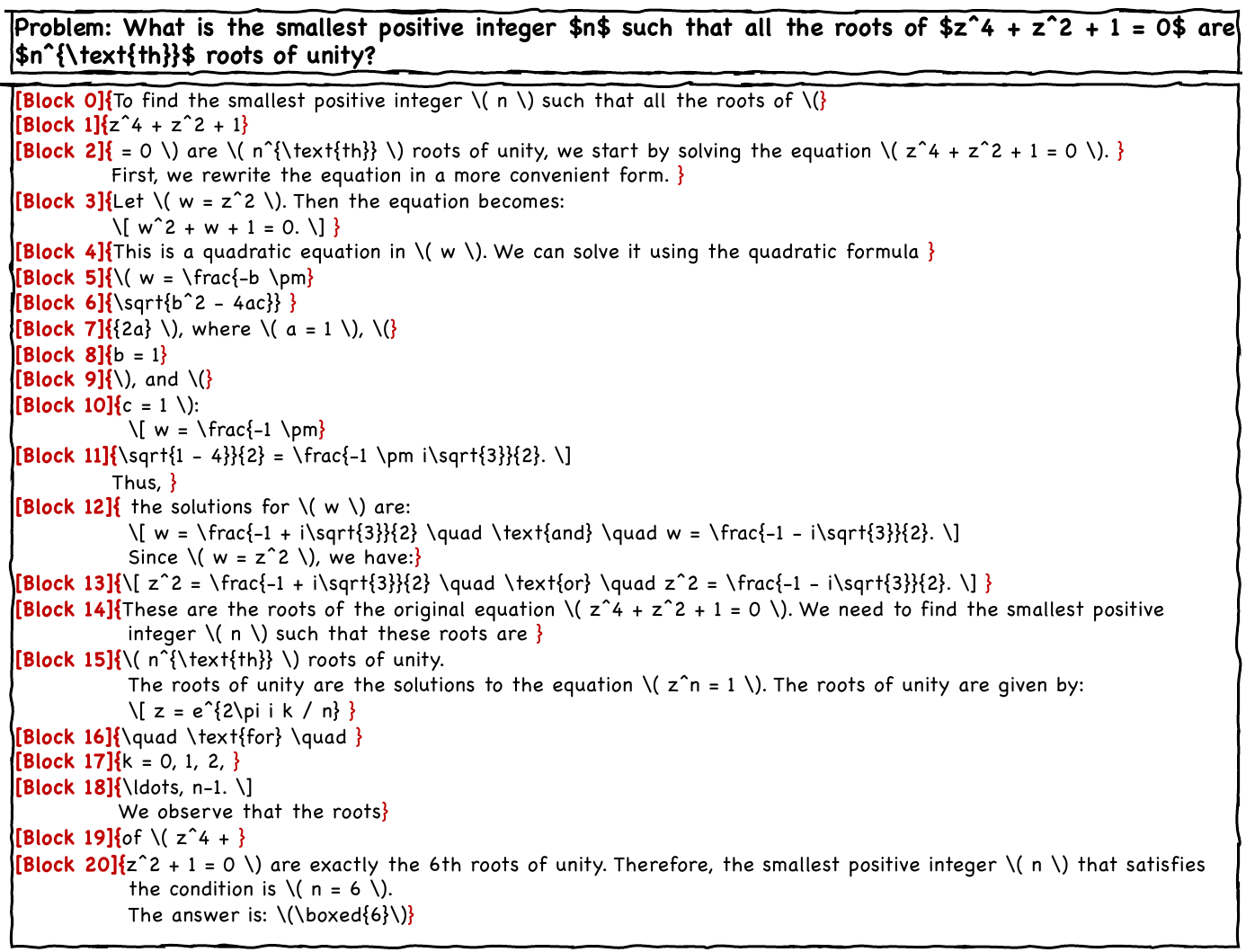}
        \caption{VSB response - all blocks}
    \end{subfigure}
    \hfill
    \begin{subfigure}[t]{0.8\linewidth}
        \centering
        \includegraphics[width=\linewidth]{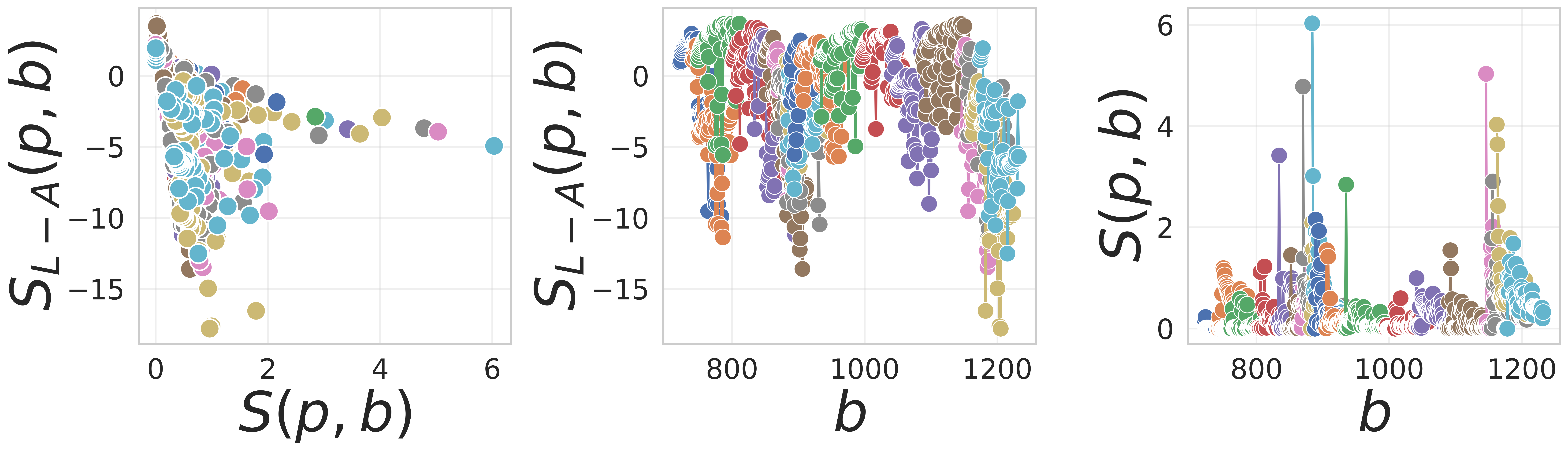}
        \caption{Self-containedness across all blocks}
    \end{subfigure}
    \caption{Case study 2 Overview}
    \label{fig:casestudy2_overview}
\end{figure}

\begin{figure}[t]
    \centering
    \begin{subfigure}[t]{0.6\linewidth}
        \centering
        \includegraphics[width=0.8\linewidth]{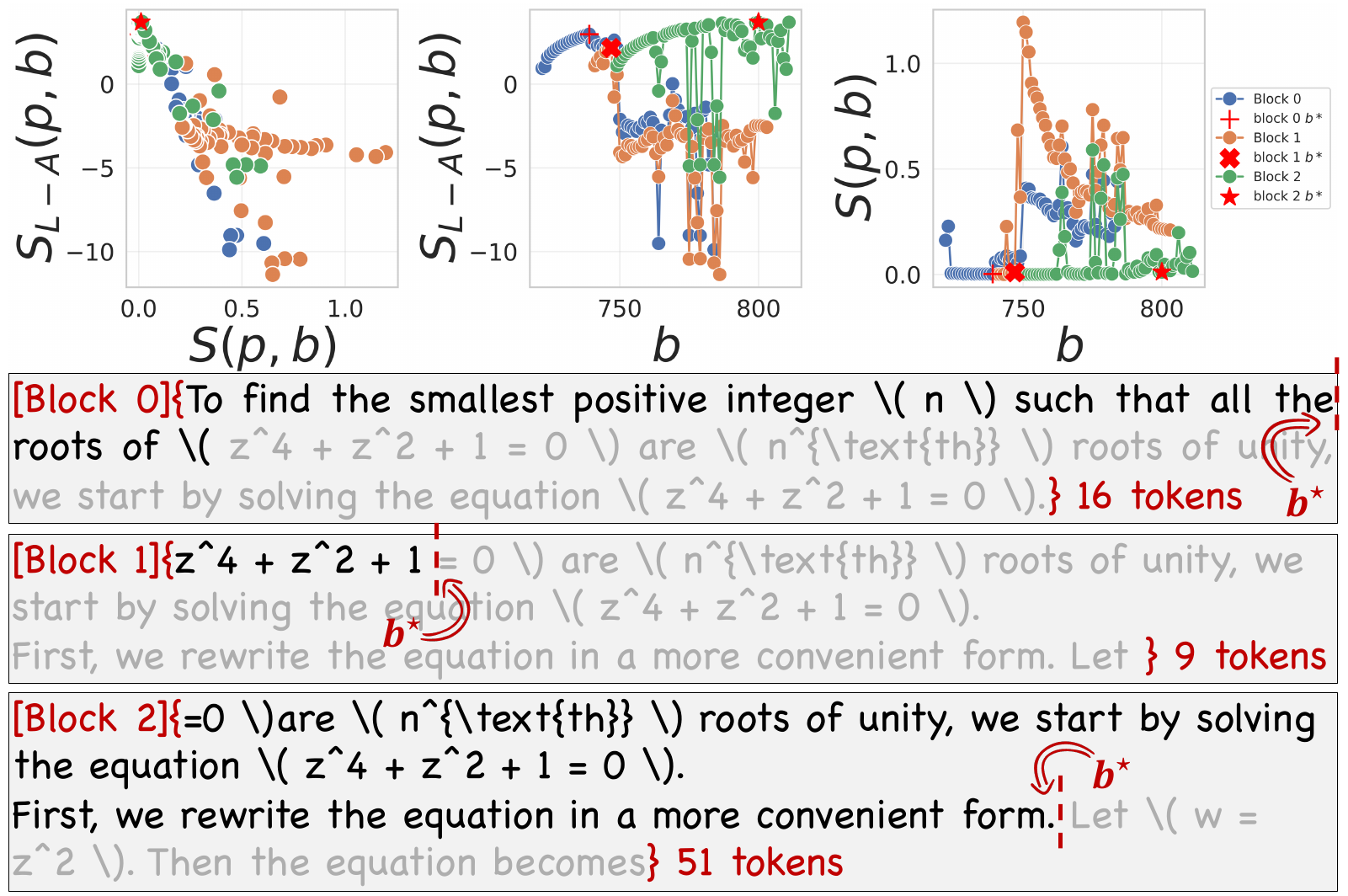}
        \caption{Block 0-2}
    \end{subfigure}

    \begin{subfigure}[t]{0.49\linewidth}
        \centering
        \includegraphics[width=0.9\linewidth]{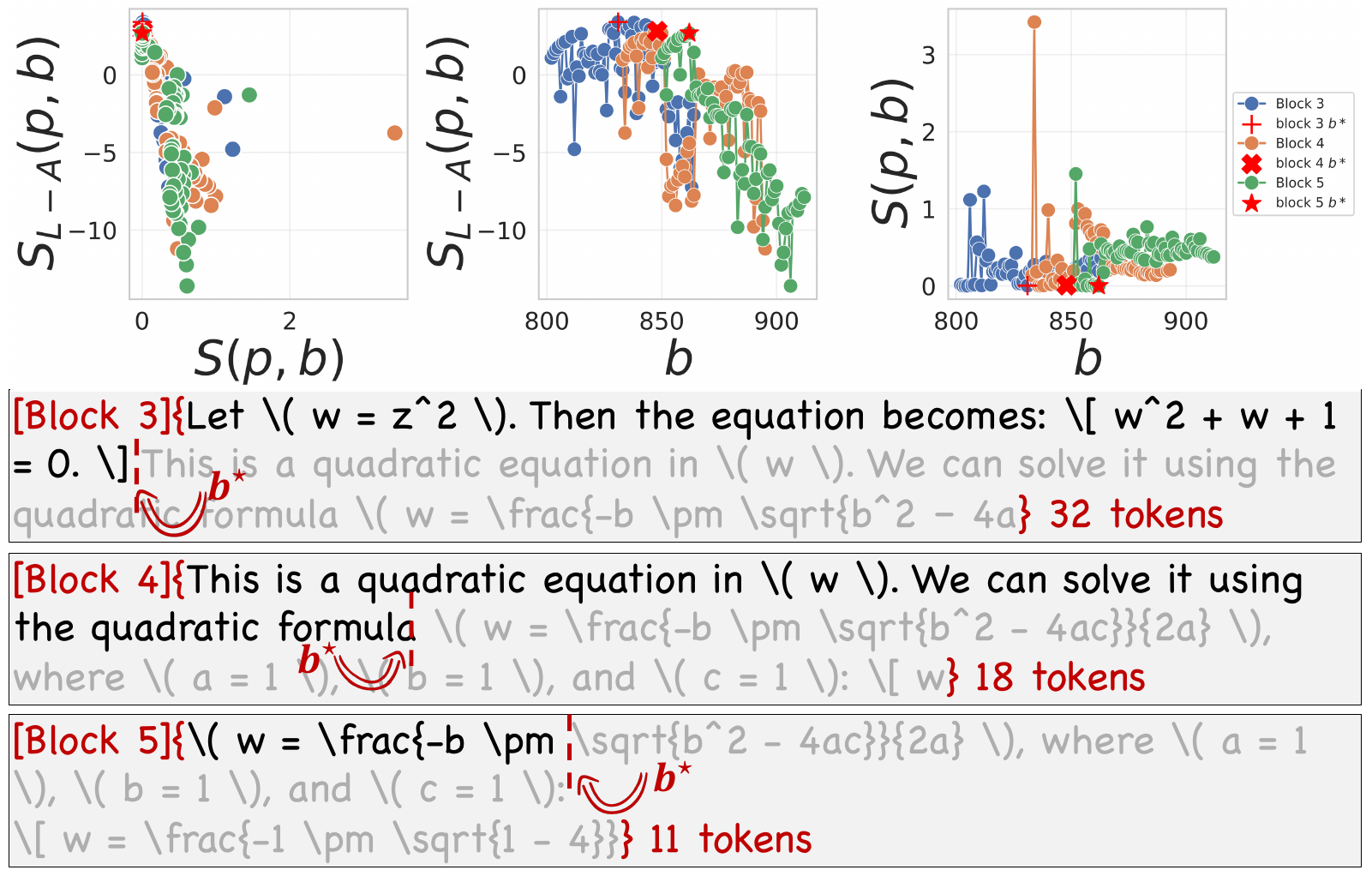}
        \caption{Block 3-5}
    \end{subfigure}
    \hfill
    \begin{subfigure}[t]{0.49\linewidth}
        \centering
        \includegraphics[width=0.9\linewidth]{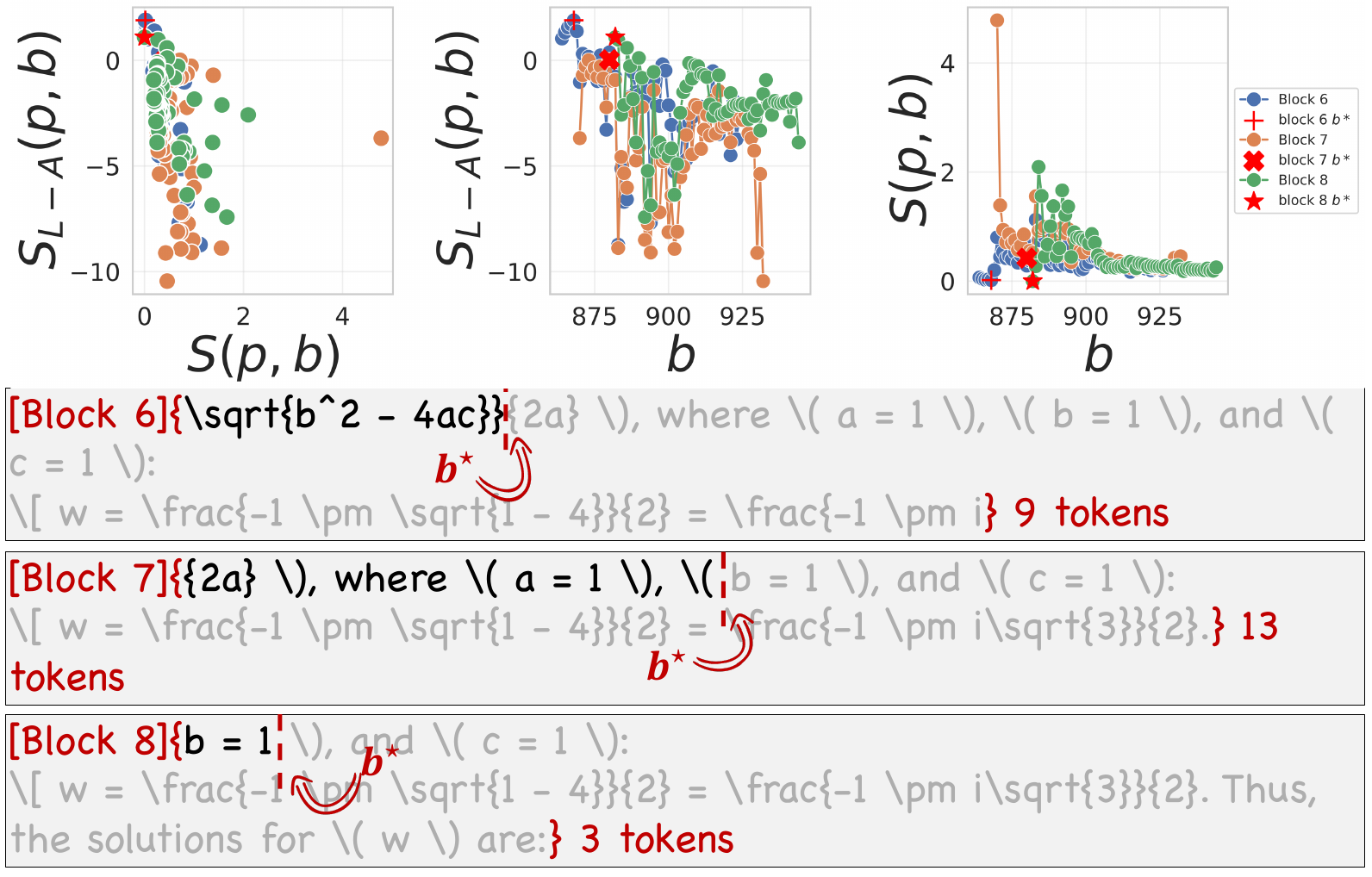}
        \caption{Block 6-8}
    \end{subfigure}

        \begin{subfigure}[t]{0.49\linewidth}
        \centering
        \includegraphics[width=0.9\linewidth]{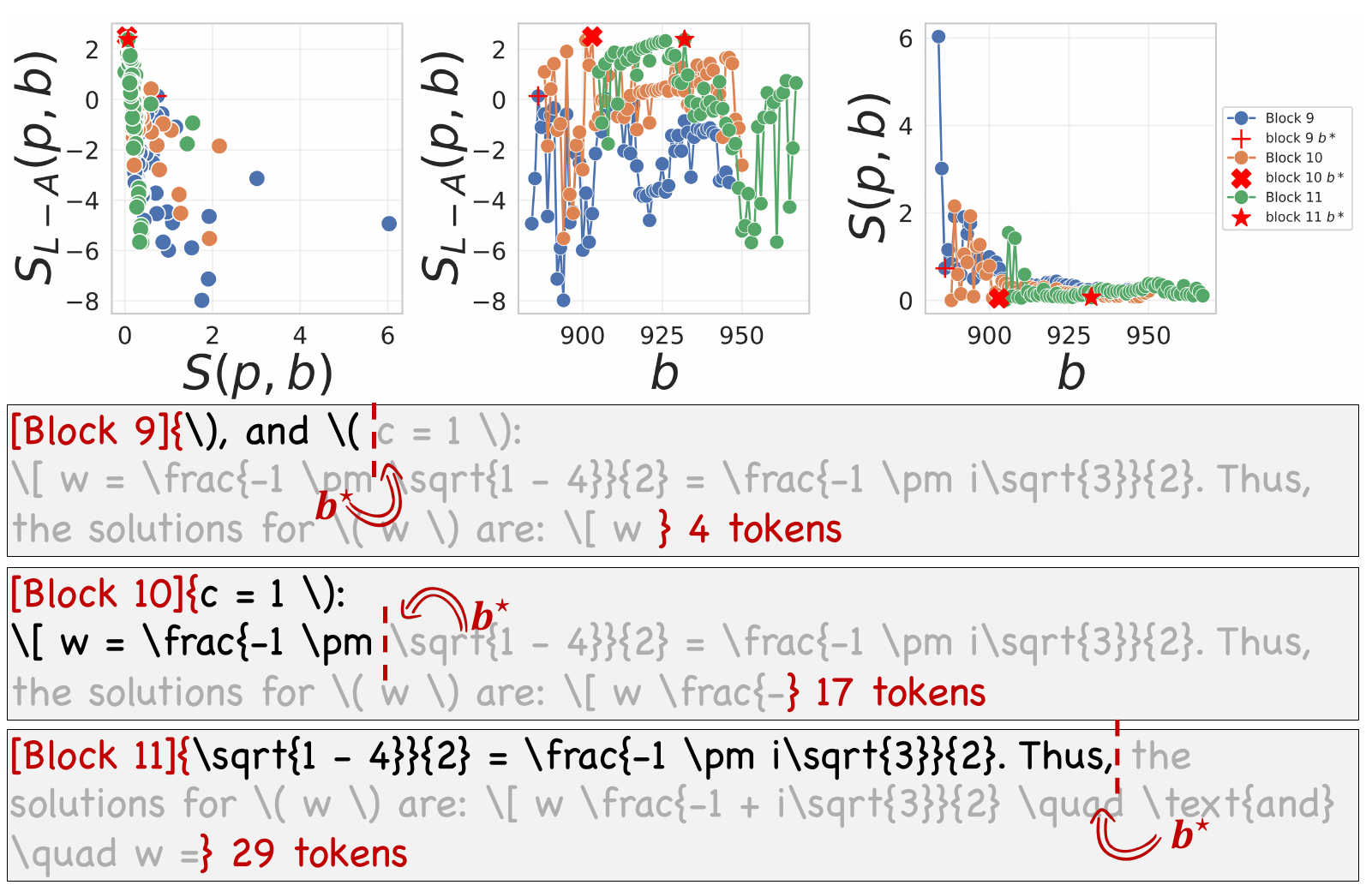}
        \caption{Block 9-11}
    \end{subfigure}
    \hfill
    \begin{subfigure}[t]{0.49\linewidth}
        \centering
        \includegraphics[width=0.9\linewidth]{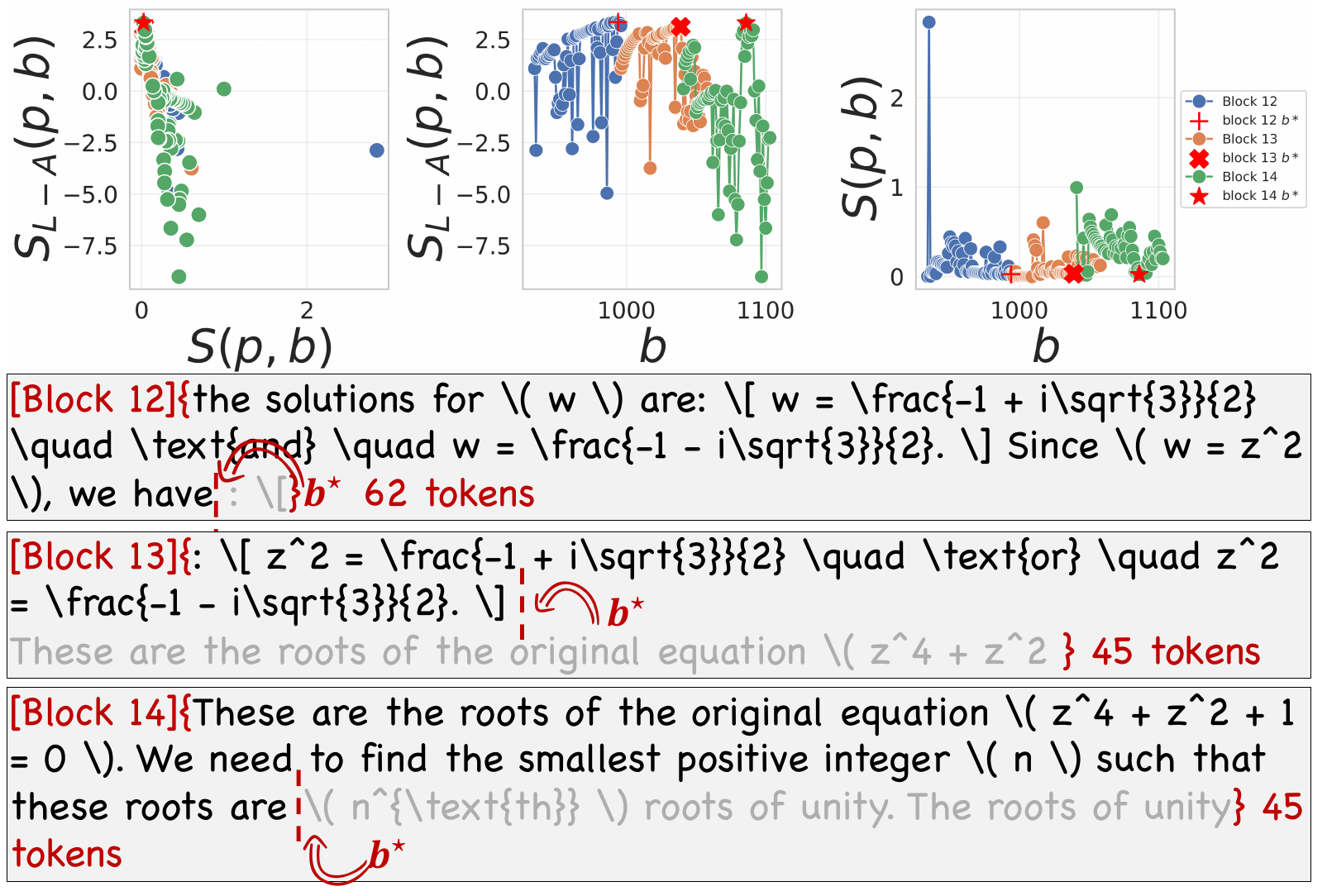}
        \caption{Block 12-14}
    \end{subfigure}

    \caption{Self-containedness divergence and corresponding decoded tokens with self-contained boundary highlighted.}
    \label{fig:casestudy2_fullset_1}
\end{figure}

\begin{figure}[t]
    \centering
    \begin{subfigure}[t]{0.49\linewidth}
        \centering
        \includegraphics[width=0.9\linewidth]{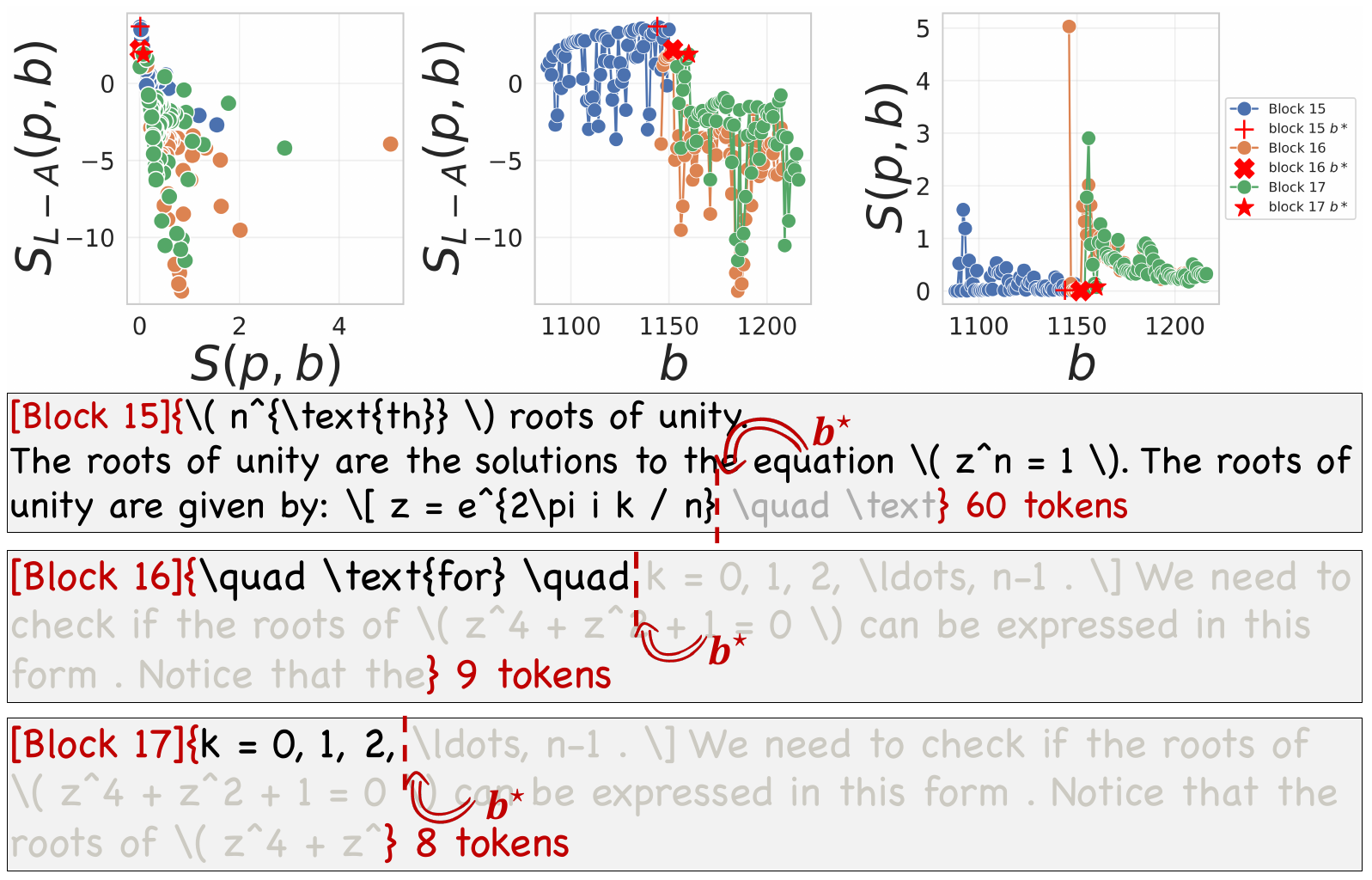}
        \caption{Block 15-17}
    \end{subfigure}
    \hfill
    \begin{subfigure}[t]{0.49\linewidth}
        \centering
        \includegraphics[width=0.9\linewidth]{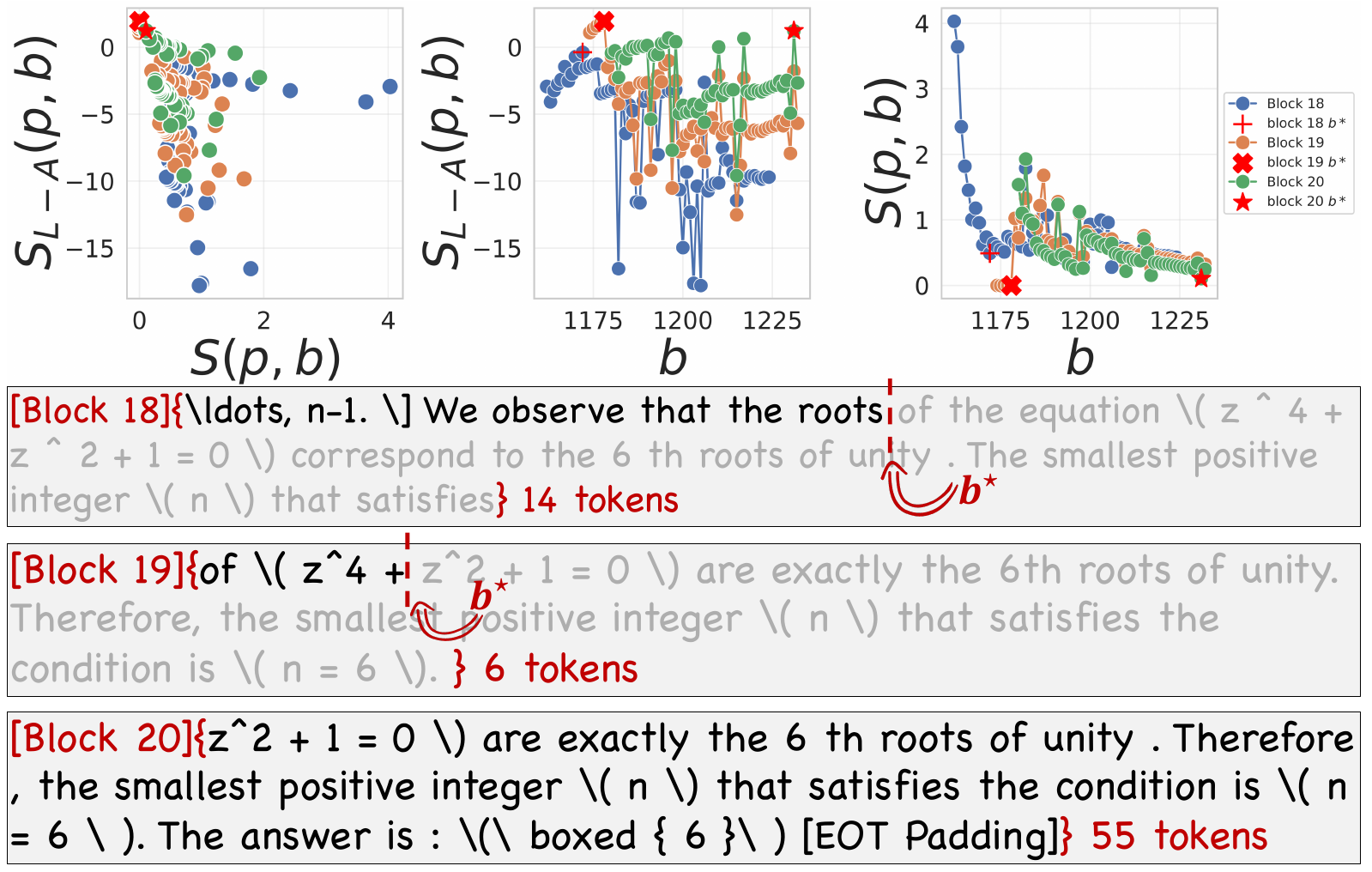}
        \caption{Block 18-20}
    \end{subfigure}
    \caption{Self-containedness divergence and corresponding decoded tokens with self-contained boundary highlighted - Continuation.}
    \label{fig:casestudy2_fullset_2}
\end{figure}
\end{document}